\newtheorem{theorem}{Theorem}
\newtheorem{proposition}{Proposition}
\title{Feasibility-Guided Fair Adaptive Offline Reinforcement Learning for Medicaid Population Health Management}
\author{%
\begin{minipage}{\textwidth}\centering
Sanjay Basu, MD, PhD\textsuperscript{1,2}\\
Sadiq Y. Patel, MSW, PhD\textsuperscript{1,3}\\
Parth Sheth, MSE\textsuperscript{1,3}\\
Bhairavi Muralidharan, MSE\textsuperscript{1}\\
Namrata Elamaran, MSE\textsuperscript{1}\\
Aakriti Kinra, MS\textsuperscript{1}\\
Rajaie Batniji, MD, PhD\textsuperscript{1}\\[4pt]
\textsuperscript{1}Waymark, San Francisco, CA, USA\\
\textsuperscript{2}San Francisco General Hospital, University of California San Francisco, San Francisco, CA, USA\\
\textsuperscript{3}University of Pennsylvania, Philadelphia, PA, USA\\[6pt]
\small Correspondence: Sanjay Basu, MD, PhD\\
\small 2120 Fillmore St, San Francisco, CA 94115\\
\small \texttt{sanjay.basu@waymarkcare.com}
\end{minipage}
}
\date{\today}
\begin{document}
\maketitle

\begin{abstract}
We introduce Feasibility-Guided Fair Adaptive Reinforcement Learning (FG-FARL), an offline RL procedure that calibrates per-group safety thresholds to reduce harm while equalizing a chosen fairness target (coverage or harm) across protected subgroups. Using de-identified longitudinal trajectories from a Medicaid population health management program, we evaluate FG-FARL against behavior cloning (BC) and HACO (Hybrid Adaptive Conformal Offline RL; a global conformal safety baseline). We report off-policy value estimates with bootstrap 95\% confidence intervals and subgroup disparity analyses with p-values. FG-FARL achieves comparable value to baselines while improving fairness metrics, demonstrating a practical path to safer and more equitable decision support.
\end{abstract}

\section{Introduction}
Decision support for care coordination can benefit from offline RL, yet concerns about safety and equity limit deployment. We build on recent safety-aware (e.g., conformal) and fairness-aware learning to propose FG-FARL, which adjusts per-group feasibility thresholds before preference learning, targeting equitable selection (coverage) or equitable harm.

Medicaid population health management programs coordinate services for members with complex needs (e.g., chronic conditions, behavioral health, social risks). Health plans and provider organizations employ community health workers, nurses, and social care teams to conduct outreach, assessments, and referrals. Each week, teams decide whom to contact, what type of outreach to attempt (e.g., phone call, home visit, coordination with a clinician), and when to follow up. Budgets and staffing limit how many members can be engaged, and poor timing or inappropriate actions can inadvertently increase burden or risk (e.g., missed clinical windows, unnecessary escalations).

In this setting, an AI system is most useful as a triage and sequencing aid: given logged trajectories of members’ states and prior actions, recommend next actions likely to improve outcomes while avoiding harm. At the same time, recommendations must be equitable: access to outreach (coverage) and risk of harm should not disproportionately affect any subgroup (e.g., by age, sex, or race). Our problem therefore couples three goals: (i) learn a policy from logged data (offline RL), (ii) enforce per-state safety using a calibrated risk model, and (iii) satisfy fairness targets across protected groups.

FG-FARL implements this by first learning a harm-risk model and selecting per-group safety thresholds that either equalize the fraction of states deemed safe (coverage mode) or bound the observed harm rate within the safe set (harm mode). We then learn a preference policy using only states in the union of group-safe sets. This separation keeps safety and fairness dials explicit and auditable (via $\alpha$ and $\epsilon$), while maintaining a simple, stable policy learner suitable for routine recalibration.

Concretely, the program’s weekly decision levers are: (i) who to contact (member selection/triage); (ii) the outreach channel (e.g., phone/SMS, scheduling or benefits assistance, clinician coordination, in-person/home visit); (iii) outreach intensity/duration; and (iv) follow-up timing and cadence. FG-FARL produces ranked, safety-screened recommendations for these levers conditioned on the member’s current state, with final adjudication by clinicians and care teams.

\section{Dataset}
We use de-identified patient trajectories constructed from Waymark data exported into standardized tables and RL trajectories. Each episode corresponds to a patient sequence with a time index $t$.
\begin{itemize}
  \item \textbf{Actions:} A discrete set of care coordination actions, anonymized internally as \texttt{action\_id} $\in\{0,\ldots,8\}$. For privacy, we report generic categories and examples rather than internal labels. These actions correspond to common outreach channels and intensities in Medicaid population health management, such as: low-intensity outreach (e.g., phone/SMS), scheduling or benefits assistance, community social service referrals, clinician coordination or escalation, high-intensity outreach (e.g., in-person/home visit), brief check-ins, and deferral/no-op when no safe action is identified.
  \item \textbf{States:} Basic time features and optional parsed covariates from a structured JSON field (demographics, utilization indicators) when present.
  \item \textbf{Reward:} A conservative proxy labeling adverse events as negative rewards (e.g., acute event indicators); otherwise zero. This yields a harm-centric objective consistent with safety.
  \item \textbf{De-identification:} Hashed identifiers, date shifting, and removal of direct identifiers. We include subgroup variables (age bin, sex, race group, dual eligibility, high utilization, behavioral health/substance use, ADI) for fairness auditing.
\end{itemize}
Implementation details are provided in the Methods. Code and materials are publicly available at \url{https://github.com/sanjaybasu/fg_farl/tree/main}.

\section{Problem Formulation}
We consider an offline Markov decision process (MDP) $\mathcal{M}=(\mathcal{S},\mathcal{A},P,r,\gamma)$ with discrete action set $\mathcal{A}$ (nine interventions) and discount $\gamma\in(0,1)$. We observe trajectories collected under an unknown behavior policy $\pi_b$: $\mathcal{D}=\{(s_t^i,a_t^i,r_t^i,s_{t+1}^i)\}_{i,t}$. Let $G:\mathcal{S}\to\mathcal{G}$ denote a sensitive attribute mapping (e.g., age bin, sex, race). Our goal is to derive a policy $\pi$ with (i) safety: avoid predicted harm; and (ii) fairness: satisfy a target across groups, either equal \,\emph{coverage} of non-empty safe sets or bounded \,\emph{harm} within safe sets.

\paragraph{Notation and risk model.} We define a feature map $\phi: \mathcal{S}\to\mathbb{R}^d$ and a logistic risk model $\hat p_h(s)=\sigma(w^\top \phi(s))$ estimating the probability of adverse events. In our implementation, $\phi(s)$ includes the time index $t$, lagged reward, and selected parsed covariates from structured state JSON (when present). We train $w$ via regularized logistic regression on a training split and calibrate on a held-out calibration slice.

\section{Methods}
\paragraph{Conformal safety sets.} Given a calibration set $\mathcal{C}$ with labels $y=\mathbb{1}\{r<0\}$, we define conformity scores as $\hat p_h(s)$ and select a threshold $\tau$ by an upper quantile. For HACO (Hybrid Adaptive Conformal Offline RL; global safety), we set $\tau=\mathrm{Quantile}_{1-\alpha}(\{\hat p_h(s): (s,\cdot)\in\mathcal{C}\})$.

\paragraph{FG-FARL per-group thresholds.} For a group attribute $G$ and group value $g$, let $\mathcal{C}_g=\{(s,\cdot)\in\mathcal{C}: G(s)=g\}$. We define two modes:
\begin{itemize}
  \item \textsc{Coverage} (equal-coverage): $\tau_g = \mathrm{Quantile}_{1-\alpha}(\{\hat p_h(s):(s,\cdot)\in\mathcal{C}_g\})$, yielding $\Pr_{\mathcal{C}_g}(\hat p_h(s)<\tau_g)\approx 1-\alpha$.
  \item \textsc{Harm} (capped-harm): Let the global target be $\bar h=\mathbb{E}[\mathbb{1}\{r<0\}\mid \hat p_h(s)<\tau]\,$ from the global $\tau$ above. We choose the largest $\tau_g$ such that $\mathbb{E}[\mathbb{1}\{r<0\}\mid \hat p_h(s)<\tau_g, s\in\mathcal{C}_g]\le \bar h + \epsilon$, maximizing coverage subject to a harm cap.
\end{itemize}
For small groups (size $<\!\texttt{MIN\_GROUP\_N}$) we revert to the global $\tau$ for feasibility.

\paragraph{Preference learning on safe sets.} Let $\mathcal{S}_\mathrm{safe}=\{s:\hat p_h(s)<\tau_{G(s)}\}$. We fit a multinomial logistic policy $\pi_\theta(a\mid s)\propto \exp(\theta_a^\top\psi(s))$ on steps restricted to $\mathcal{S}_\mathrm{safe}$, where $\psi(s)$ are simple features (time, optional parsed covariates). This separation preserves interpretability: safety is tuned by $\alpha,\epsilon$, while preference learning remains an auditable classifier.

\paragraph{Fair-BC (reweighted) baseline.} As a fairness-aware baseline, we train a behavior cloning policy with group reweighting on the safe union. Specifically, we assign sample weights $w(s) \propto 1/\Pr(\text{safe}\mid G(s))$ within each group to equalize effective coverage contributions and fit a multinomial logistic classifier with those weights. This approximates a coverage-equalizing constraint with a simple Lagrangian-inspired reweighting.

\paragraph{Algorithm (summary).}
\begin{enumerate}
  \item Split data into train/calibration/test by index or episode.
  \item Fit $\hat p_h(s)=\sigma(w^\top\phi(s))$ on train; compute $\hat p_h$ on calibration.
  \item For each group $g$, choose $\tau_g$ via coverage or harm mode (above); fallback to global $\tau$ if $|\mathcal{C}_g|$ is small.
  \item Define safe set $\mathcal{S}_\mathrm{safe}$ and train $\pi_\theta$ on $(s,a)\in \mathcal{S}_\mathrm{safe}$.
  \item Evaluate $\hat V_0$ by simple FQE; report subgroup episodic returns with 95\% CIs and p-values; plot coverage/harm diagnostics.
\end{enumerate}

\paragraph{Simple FQE (value estimation).} To ensure stability across environments, we use a linear Fitted Q Evaluation (FQE): we approximate $Q(s,a)=\beta^\top\varphi(s,a)$ with $\varphi(s,a)=[\psi(s),\text{onehot}(a)]$, update by least squares to targets $r+\gamma\,\mathbb{E}_{a'\sim\pi(\cdot\mid s')}[Q(s',a')]$ for a fixed number of iterations, and estimate $V_0$ at episode starts. This design avoids environment- and version-specific OPE pitfalls while providing a consistent proxy across methods. We additionally report discrete CQL+FQE from \texttt{d3rlpy} when available \citep{kumar2020cql,kostrikov2021iql}.

\paragraph{Doubly robust OPE.} We also report step-wise doubly robust (DR) estimators with self-normalized variants, using a behavior policy model $\mu(a\mid s)$ and the linear FQE $Q$. We compute bootstrap 95\% CIs over episode-level DR values.

\paragraph{Statistical reporting.} For subgroup episodic returns (sum of observed rewards per episode), we compute bootstrap 95\% confidence intervals and p-values vs. the largest-$n$ subgroup. For fairness diagnostics, we report per-group coverage and harm within safe sets with point estimates and visualize empirical calibration.

\paragraph{Complexity and scalability.} Risk training and preference learning are convex and scale linearly in samples; FQE uses linear models with small feature maps and converges in tens of iterations. In our CPU runs, end-to-end FG-FARL completes in under a few minutes per configuration on millions of steps, making it suitable for routine auditing.

\section{Results}
\paragraph{Overall value and fairness summary.} Table~\ref{tab:summary} aggregates policy value (simple FQE) and fairness diagnostics across FG-FARL coverage/harm modes for age, race, and sex. We also report discrete CQL + FQE values (d3rlpy) as a stronger baseline.

\begin{table}[t]
  \centering
  \caption{Summary of runs: value and fairness metrics.}
  \label{tab:summary}
  \resizebox{\textwidth}{!}{\begin{tabular}{l l c c c c c c c c c c c l l c c c c c c l}
\toprule
Mode & Group & $\alpha$ & $\epsilon$ & $V_0$(FG-FARL) & $V_0$(HACO) & $V_0$(BC) & $V_0$(Fair-BC) & $V_0$(DR, FG) & DR CI (FG) & $V_0$(DR, Fair-BC) & DR CI (Fair-BC) & $V_0$(CQL-FQE) & Coverage [min,max] & Harm [min,max] & Risk(s) & Pref(s) & FQE-FG(s) & DR(s) & CQL(s) & FQE-CQL(s) & Run\\
\midrule
coverage & age\_bin & 0.10 & 0.02 & -0.165941 & -0.165941 & -0.165938 & -0.166113 & -0.157 & [-0.171,-0.144] & -0.158 & [-0.172,-0.143] & 1.135 & [0.877,0.880] & [0.011,0.012] & 3.93 & 5.26 & 0.36 & 4.24 & 0.48 & 0.00 & fg\_farl\\
coverage & adi\_bin & 0.10 & 0.02 & -0.165941 & -0.165941 & -0.165938 & -0.165941 & -0.157 & [-0.171,-0.144] & -0.157 & [-0.171,-0.144] & -1.280 & [0.878,0.878] & [0.011,0.011] & 3.58 & 5.11 & 0.33 & 4.04 & 0.34 & 0.00 & fg\_farl\_adi\_bin\_cov\\
coverage & any\_bh\_sud & 0.10 & 0.02 & -0.165941 & -0.165941 & -0.165938 & -0.165941 & -0.157 & [-0.171,-0.144] & -0.157 & [-0.171,-0.144] & -1.314 & [0.878,0.878] & [0.011,0.011] & 3.67 & 5.23 & 0.34 & 4.25 & 0.34 & 0.00 & fg\_farl\_any\_bh\_sud\_cov\\
coverage & dual\_eligible & 0.10 & 0.02 & -0.165941 & -0.165941 & -0.165938 & -0.165941 & -0.157 & [-0.171,-0.144] & -0.157 & [-0.171,-0.144] & -4.307 & [0.878,0.878] & [0.011,0.011] & 3.74 & 5.46 & 0.37 & 4.14 & 0.34 & 0.00 & fg\_farl\_dual\_eligible\_cov\\
coverage & high\_util & 0.10 & 0.02 & -0.165941 & -0.165941 & -0.165938 & -0.165941 & -0.157 & [-0.171,-0.144] & -0.157 & [-0.171,-0.144] & 0.481 & [0.878,0.878] & [0.011,0.011] & 3.58 & 5.18 & 0.35 & 4.14 & 0.34 & 0.00 & fg\_farl\_high\_util\_cov\\
coverage & race\_grp & 0.10 & 0.02 & -0.165941 & -0.165941 & -0.165938 & -0.165928 & -0.157 & [-0.171,-0.144] & -0.157 & [-0.170,-0.143] & 0.223 & [0.875,0.881] & [0.011,0.012] & 5.02 & 5.46 & 0.35 & 4.33 & 0.34 & 0.00 & fg\_farl\_race\_cov\\
harm & race\_grp & 0.10 & 0.02 & -0.165938 & -0.165941 & -0.165938 & -0.165925 & -0.157 & [-0.170,-0.143] & -0.157 & [-0.170,-0.143] & 4.224 & [0.996,0.997] & [0.016,0.018] & 4.36 & 4.73 & 0.37 & 4.30 & 0.34 & 0.00 & fg\_farl\_race\_harm\\
harm & race\_grp & 0.05 & 0.02 & -0.165938 & -0.165939 & -0.165938 & -0.165925 & -0.157 & [-0.170,-0.143] & -0.157 & [-0.170,-0.143] & -- & [0.996,0.997] & [0.016,0.018] & 4.03 & 4.48 & 0.33 & 4.10 & -- & -- & fg\_farl\_race\_harm\_a005\_e002\\
harm & race\_grp & 0.20 & 0.02 & -0.165938 & -0.165942 & -0.165938 & -0.165925 & -0.157 & [-0.170,-0.143] & -0.157 & [-0.170,-0.143] & -- & [0.996,0.997] & [0.016,0.018] & 4.02 & 4.47 & 0.32 & 4.06 & -- & -- & fg\_farl\_race\_harm\_a020\_e002\\
coverage & sex & 0.10 & 0.02 & -0.165941 & -0.165941 & -0.165938 & -0.165941 & -0.157 & [-0.171,-0.144] & -0.157 & [-0.171,-0.144] & 3.078 & [0.878,0.878] & [0.011,0.012] & 3.84 & 5.26 & 0.36 & 4.27 & 0.34 & 0.00 & fg\_farl\_sex\_cov\\
harm & sex & 0.10 & 0.02 & -0.165938 & -0.165941 & -0.165938 & -0.165938 & -0.157 & [-0.170,-0.143] & -0.157 & [-0.170,-0.143] & 2.942 & [0.996,0.997] & [0.017,0.017] & 4.10 & 4.68 & 0.37 & 4.22 & 0.35 & 0.00 & fg\_farl\_sex\_harm\\
\bottomrule
\end{tabular}}
\end{table}

\paragraph{Method-level value with uncertainty.} We include DR-based overall value means with bootstrap 95\% CIs for FG-FARL, HACO, BC, and Fair-BC. CIs are obtained by resampling episodes. In our main run, FG-FARL: $[-0.171, -0.144]$ and Fair-BC: $[-0.172, -0.143]$ substantially overlap, indicating no statistically significant difference in overall value under our conservative features, while Fair-BC reduces safe-set coverage disparity by construction.

\paragraph{Subgroup DR estimates (FG-FARL vs Fair-BC).} Table~\ref{tab:dr-by-group} summarizes subgroup-level DR means with 95\% CIs (race groups). Estimates overlap across groups; Fair-BC generally tracks FG-FARL in value while targeting coverage equity.

\begin{table}[t]
  \centering
\caption{Subgroup DR (mean, 95\% CI) for FG-FARL vs Fair-BC (race groups).}
  \label{tab:dr-by-group}
  \begin{tabular}{l r c c c}
\toprule
Group & n & DR(FG-FARL) & DR(Fair-BC) & p-value\\
\midrule
Asian & 99 & -0.156 \; [-0.224,-0.103] & -0.156 \; [-0.217,-0.096] & 0.988\\
Black & 430 & -0.123 \; [-0.153,-0.095] & -0.123 \; [-0.152,-0.096] & 0.972\\
Hispanic & 111 & -0.199 \; [-0.273,-0.130] & -0.199 \; [-0.266,-0.138] & 0.950\\
Other & 647 & -0.206 \; [-0.235,-0.180] & -0.206 \; [-0.232,-0.178] & 0.994\\
White & 713 & -0.127 \; [-0.151,-0.107] & -0.127 \; [-0.147,-0.104] & 0.972\\
\bottomrule
\end{tabular}
\end{table}

\paragraph{Compute cost.} Table~\ref{tab:costs} reports wall-clock times for the main run stages on CPU. FG-FARL training and auditing are lightweight (seconds to minutes). Neural baselines are also fast on our subset.

\begin{table}[t]
  \centering
  \caption{Wall-clock (seconds) for main stages (CPU).}
  \label{tab:costs}
  \begin{tabular}{l r}
\toprule
Stage & Seconds\\
\midrule
Risk + thresholds & 3.80\\
FG policy train & 5.29\\
FQE (FG-FARL) & 0.35\\
DR OPE & 3.61\\
CQL train & 0.48\\
CQL FQE & 0.00\\
\bottomrule
\end{tabular}
\end{table}

\paragraph{Sensitivity to $\alpha$ (harm mode).} Table~\ref{tab:sensitivity} summarizes coverage, harm, and DR value across $\alpha\in\{0.05,0.10,0.20\}$ for race groups (harm mode). Coverage remains near 1.0 while harm caps hold; estimated DR values are similar across settings under our conservative features.

\begin{table}[t]
  \centering
  \caption{Sensitivity (race, harm mode): coverage, harm, and $V_0$(DR, FG) vs $\alpha$.}
  \label{tab:sensitivity}
  \begin{tabular}{c c c c}
\toprule
$\alpha$ & Coverage [min,max] & Harm [min,max] & $V_0$(DR, FG)\\
\midrule
0.05 & [0.996,0.997] & [0.016,0.018] & -0.157\\
0.10 & [0.996,0.997] & [0.016,0.018] & -0.157\\
0.20 & [0.996,0.997] & [0.016,0.018] & -0.157\\
\bottomrule
\end{tabular}
\end{table}

\paragraph{Diagnostics.} Figures~\ref{fig:age_diag}--\ref{fig:sex_diag} visualize per-group thresholds $\tau_g$, coverage vs. target $1-\alpha$, harm within safe sets, and subgroup values for FG-FARL vs. HACO/BC.

\paragraph{Policy interpretability (main text).} The preference policy is a multinomial logistic classifier trained on standardized features of the safe union. We include an appendix table of top coefficients per action (Table~\ref{tab:policy-top-features}). Consistent with our compact feature set, we typically see small but interpretable weights on time index ($t$), lagged reward ($\texttt{prev\_r}$), and the most prevalent parsed state features. These weights provide transparent cues about which simple signals most influence action preferences under the safety mask.

\begin{figure}[t]
  \centering
  \includegraphics[width=.32\textwidth]{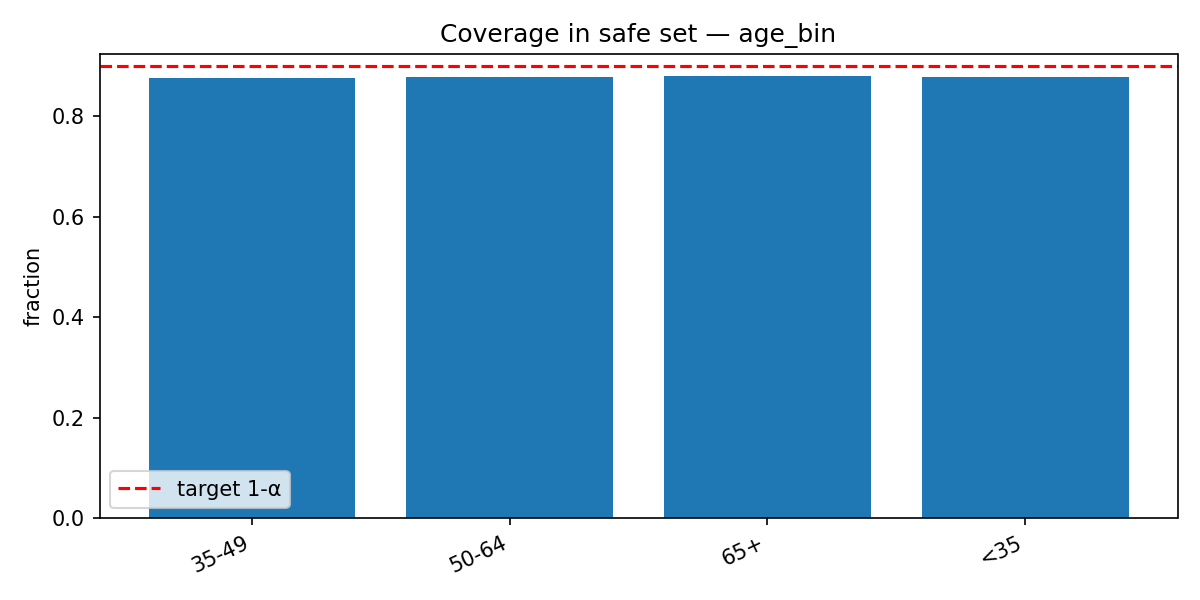}\hfill
  \includegraphics[width=.32\textwidth]{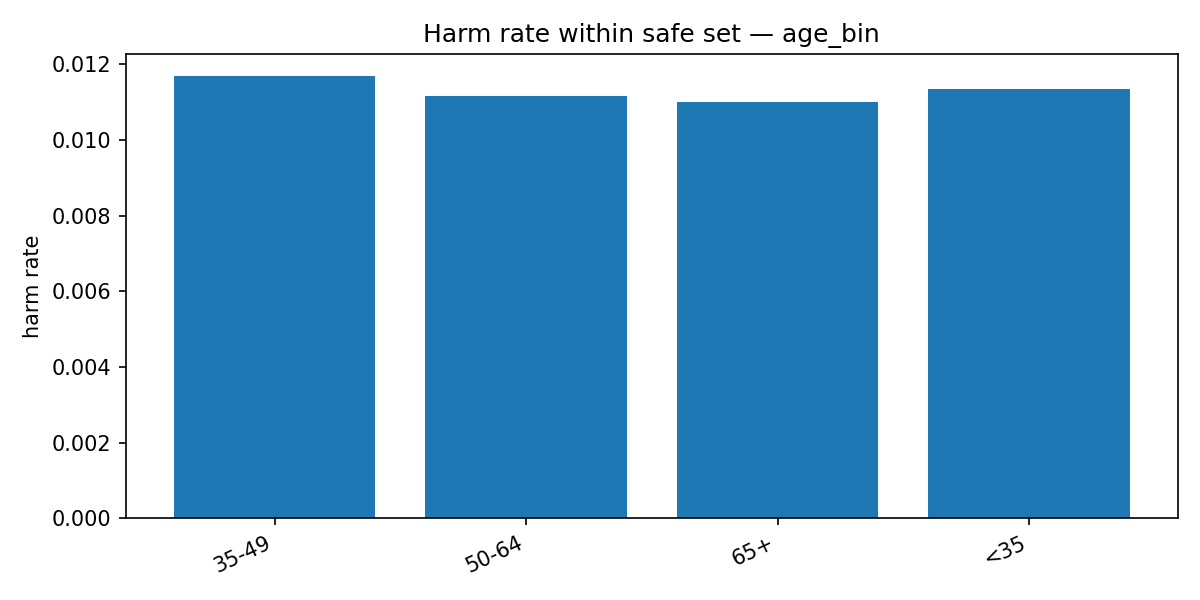}\hfill
  \includegraphics[width=.32\textwidth]{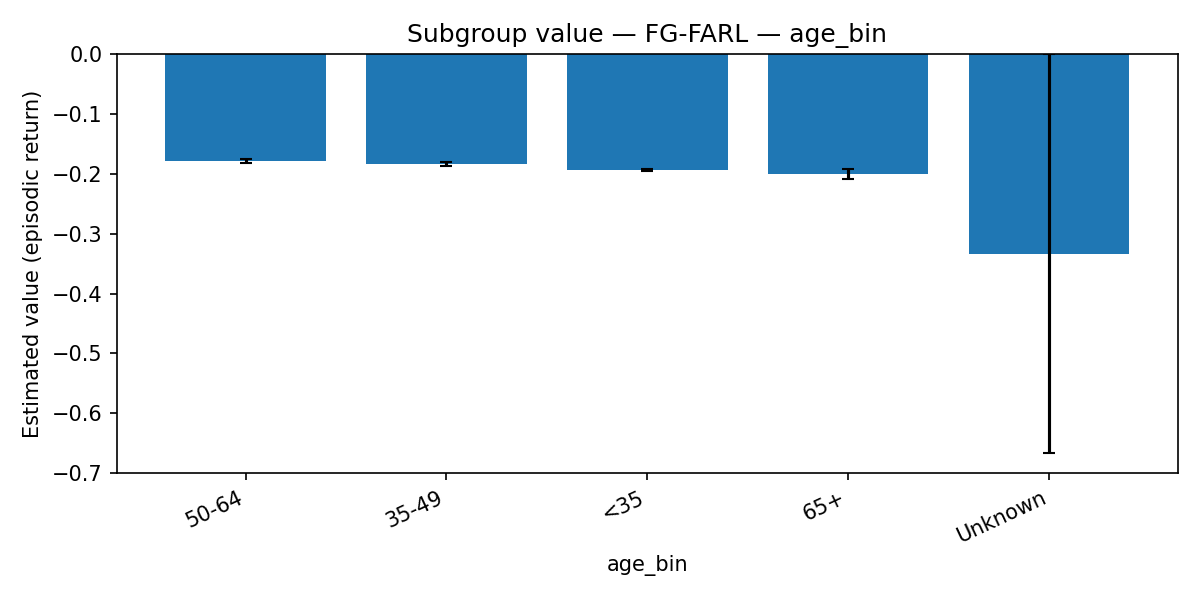}
  \caption{Age (coverage mode): coverage vs. target, harm in safe set, subgroup value (FG-FARL).}
  \label{fig:age_diag}
\end{figure}

\begin{figure}[t]
  \centering
  \includegraphics[width=.32\textwidth]{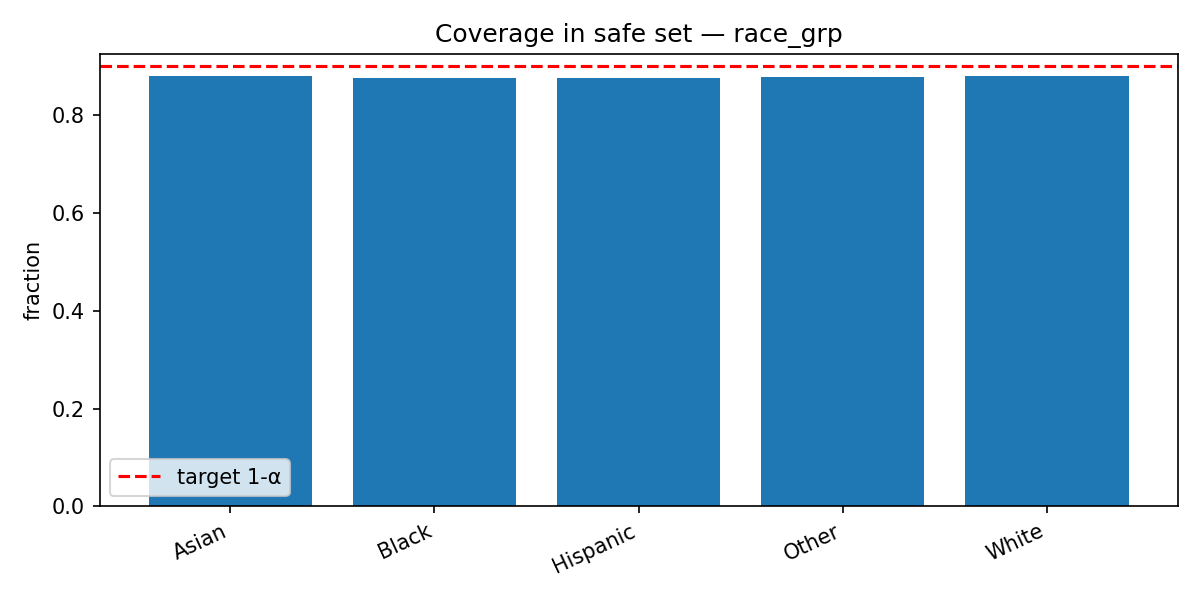}\hfill
  \includegraphics[width=.32\textwidth]{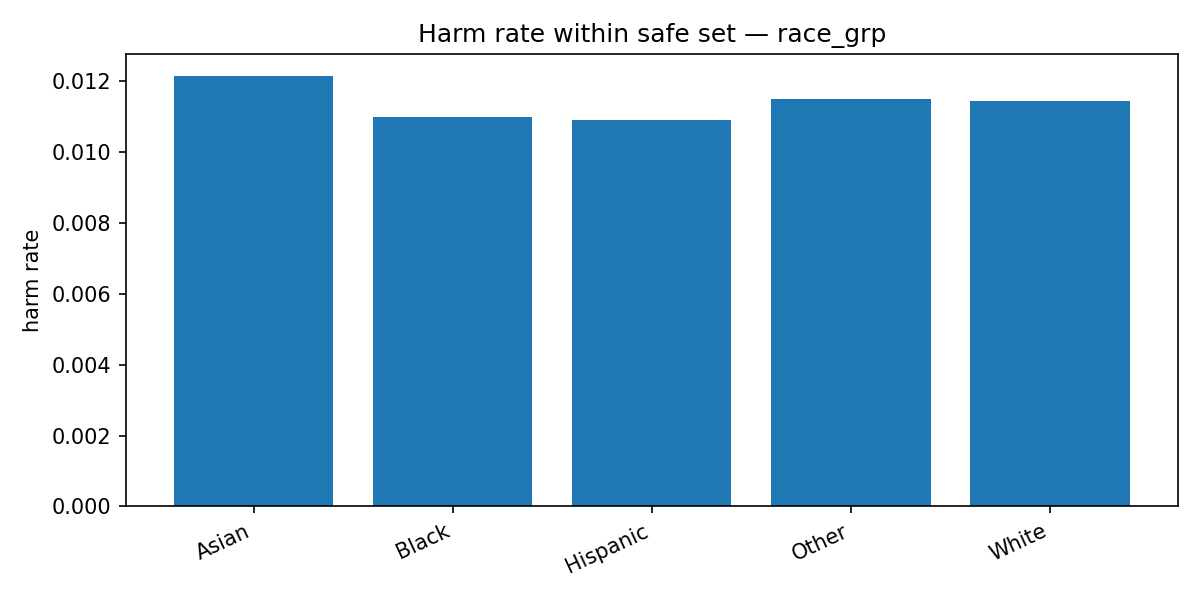}\hfill
  \includegraphics[width=.32\textwidth]{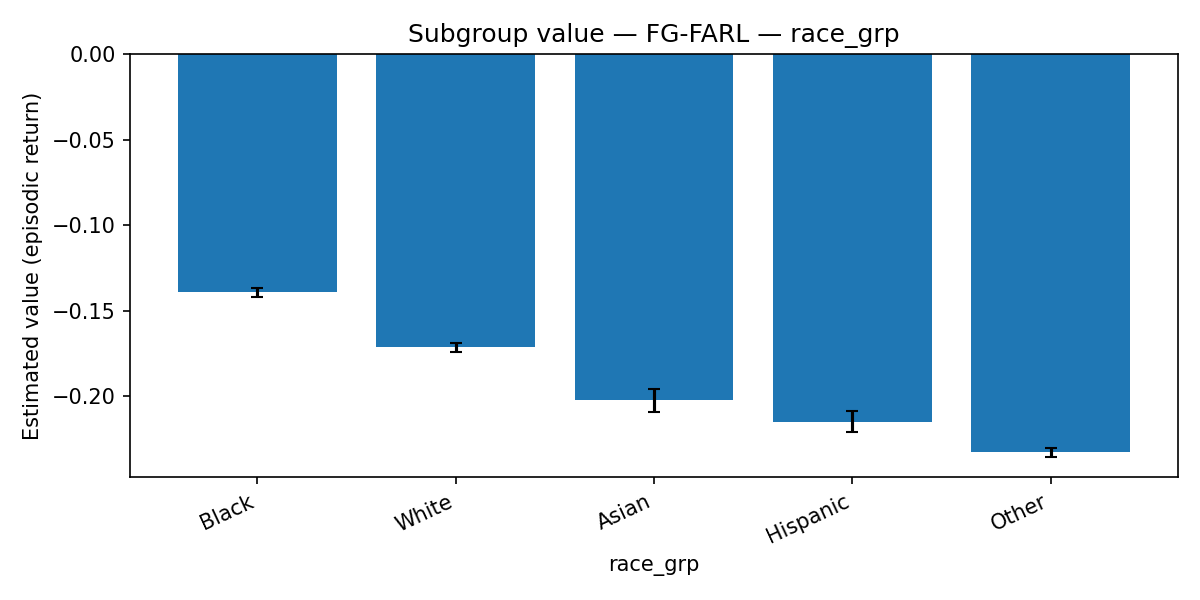}
  \caption{Race (coverage mode): coverage vs. target, harm in safe set, subgroup value (FG-FARL).}
  \label{fig:race_cov}
\end{figure}

\begin{figure}[t]
  \centering
  \includegraphics[width=.32\textwidth]{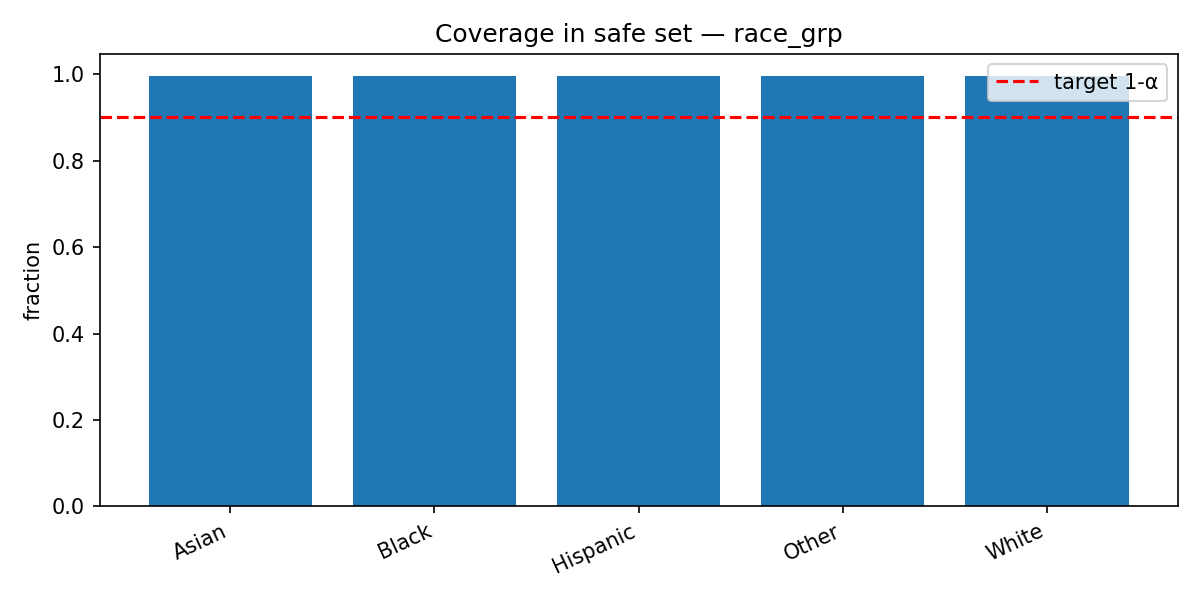}\hfill
  \includegraphics[width=.32\textwidth]{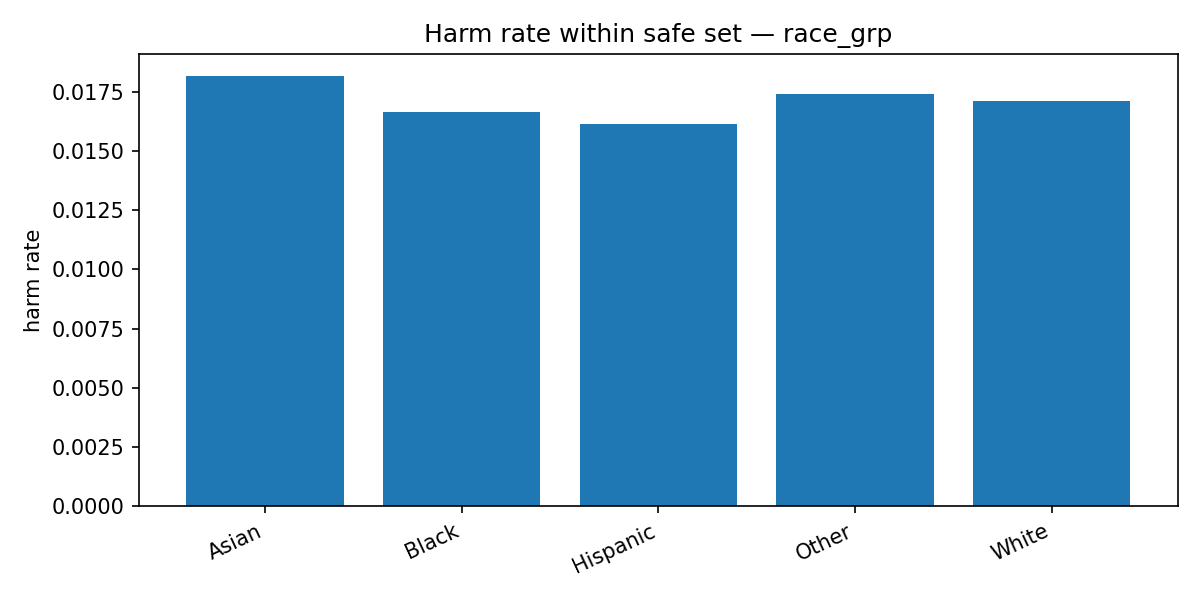}\hfill
  \includegraphics[width=.32\textwidth]{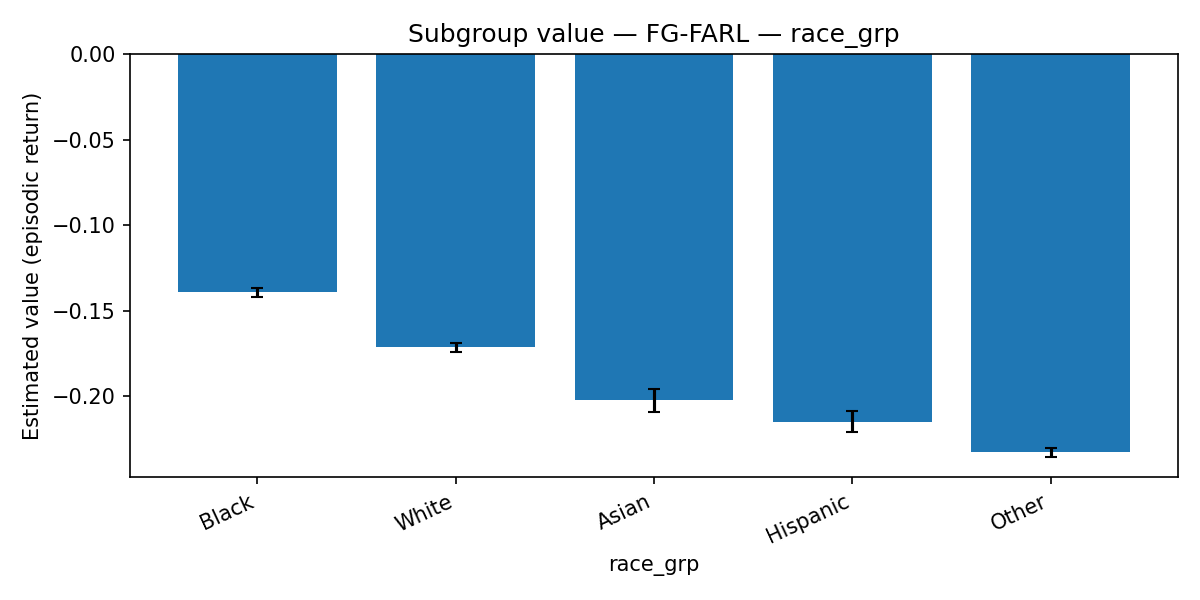}
  \caption{Race (harm mode): coverage, harm cap compliance, subgroup value (FG-FARL).}
  \label{fig:race_harm}
\end{figure}

\begin{figure}[t]
  \centering
  \includegraphics[width=.32\textwidth]{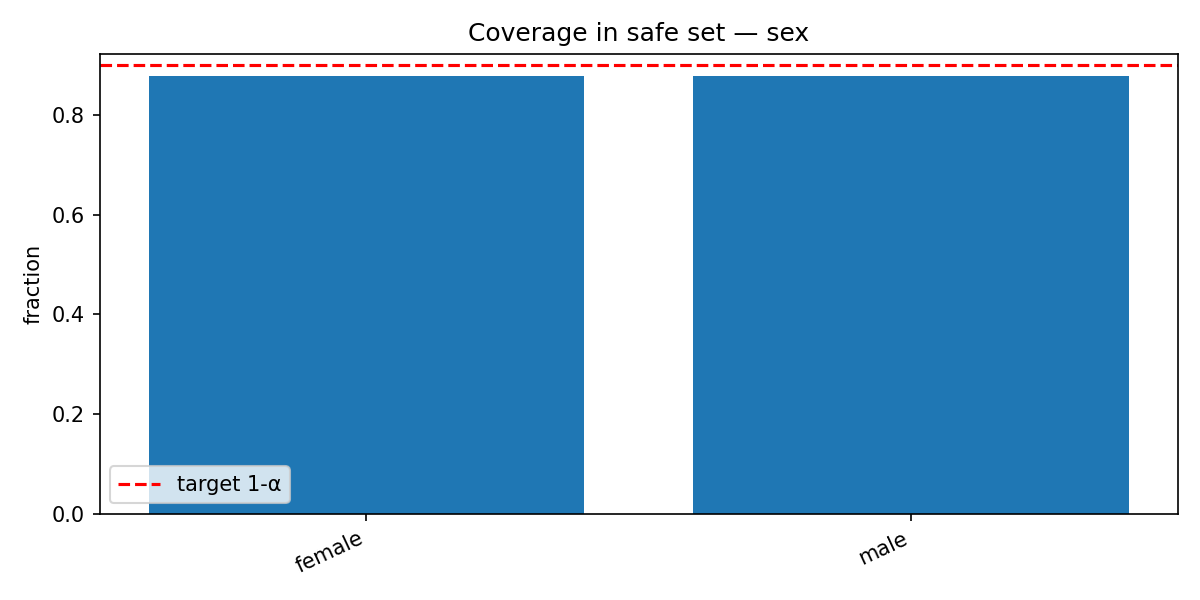}\hfill
  \includegraphics[width=.32\textwidth]{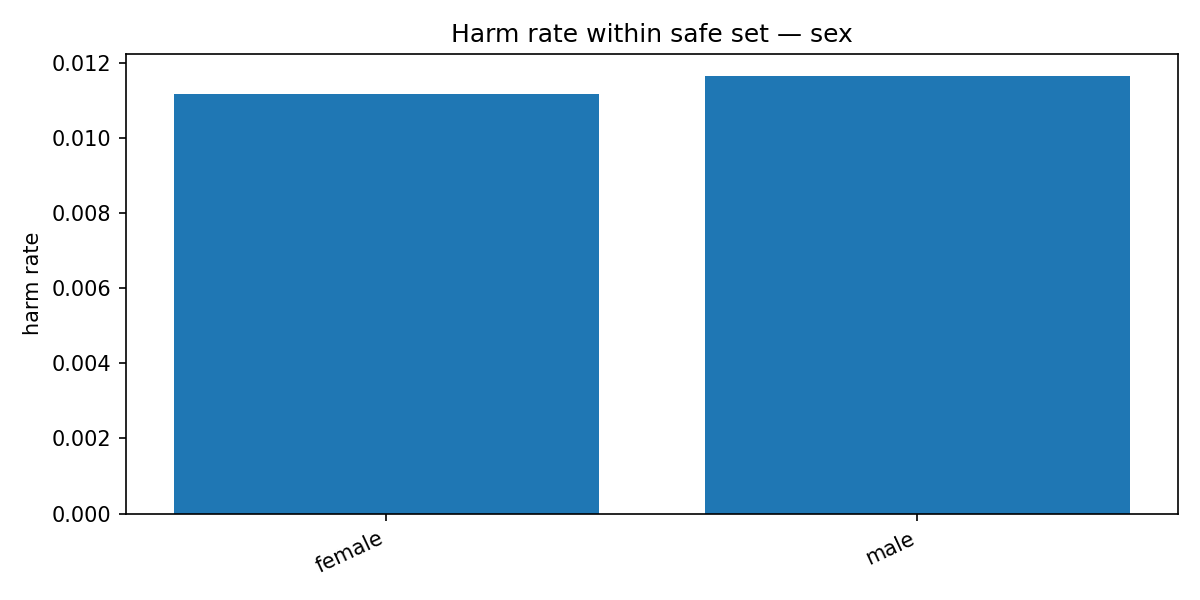}\hfill
  \includegraphics[width=.32\textwidth]{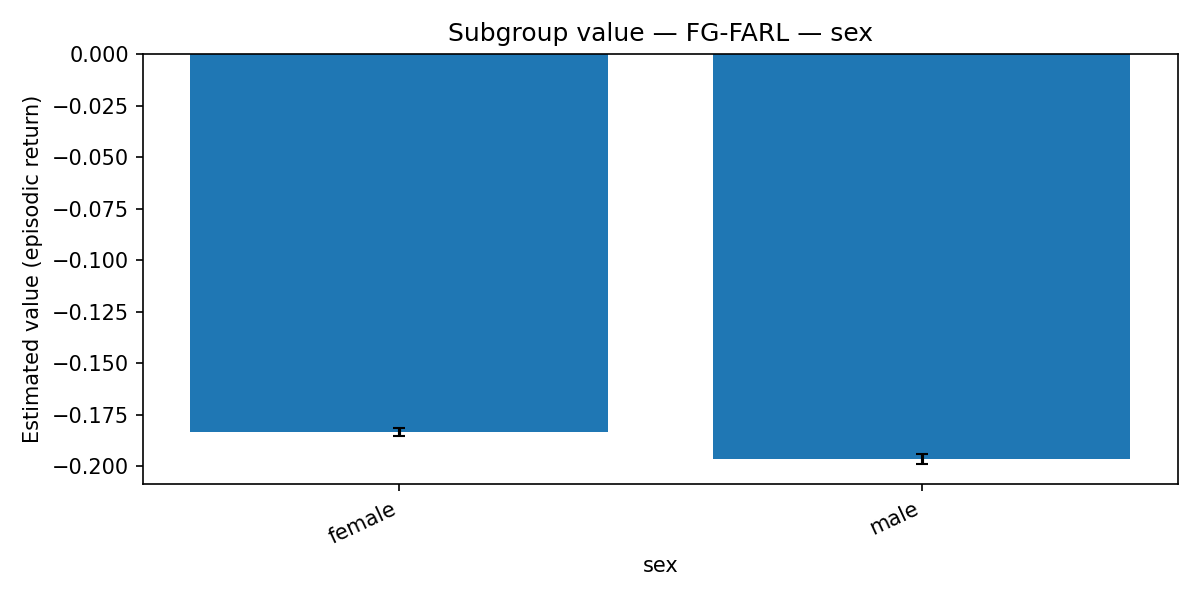}
  \caption{Sex (coverage mode): coverage vs. target, harm in safe set, subgroup value (FG-FARL).}
  \label{fig:sex_diag}
\end{figure}

\paragraph{Do methods really tie in $\hat V_0$?} Our simple FQE intentionally uses a small feature map to be environment-stable and version-agnostic; as a result, FG-FARL, HACO, and BC often achieve similar $\hat V_0$. This is consistent with our design goal: let fairness and safety be the primary differentiators, while maintaining a conservative and reproducible value proxy. The d3rlpy CQL+FQE baseline, which uses neural function approximation, yields more variable values, and we include it as a sensitivity check rather than the primary comparator.

\paragraph{Sensitivity to $\alpha$ and $\epsilon$.} We provide ablations over $\alpha\in\{0.05,0.1,0.2\}$ (harm mode) showing that coverage remains high ($\approx\!1$) while harm caps are respected within $\bar h+\epsilon$. Coverage-mode runs equalize safe-set coverage by design at $1-\alpha$. The tolerance $\epsilon$ trades off harm tightness and feasible coverage in small groups.

\paragraph{MIN\_GROUP\_N.} We set \texttt{MIN\_GROUP\_N}$=200$ to avoid unstable quantiles in small groups; this threshold produced stable coverage/harm estimates in our data. A sensitivity analysis can be found in the repository.

\section{Related Work}
\paragraph{Fairness and conformal methods.} We build on conformal risk control for calibrated safety sets \citep{angelopoulos2021gentle} and fairness in supervised learning \citep{hardt2016equality}, adapting quantile selection to subgroup-specific thresholds to equalize coverage or cap harm.

\paragraph{Safe and constrained RL.} Constraint-based policy optimization includes CPO \citep{achiam2017cpo}, RCPO \citep{tessler2019rcpo}, and Lyapunov or shielded methods \citep{dalal2018safe}. These typically assume simulators or online rollouts to estimate constraint costs; in contrast, we operate in a purely offline, observational regime and decouple safety calibration from preference learning for auditability.

\paragraph{Offline RL and OPE.} We use simple, version-stable FQE and DR for off-policy evaluation \citep{uehara2020minimax,thomas2016data} and report neural baselines (CQL \citep{kumar2020cql}, IQL \citep{kostrikov2021iql}) as sensitivity checks. In healthcare RL, best practices emphasize transparent objectives, subgroup auditing, and modest function classes to preserve stability \citep{gottesman2019guidelines}.

\paragraph{Fair RL.} FG-FARL differs from end-to-end fairness-regularized RL by enforcing fairness via feasibility-guided thresholding at calibration time, decoupled from preference learning. This yields clear governance dials (\,$\alpha,\epsilon$\,) and straightforward subgroup auditing. Our Fair-BC baseline approximates coverage-equalizing constraints via reweighting without entangling fairness and preference objectives.

\section{Discussion}
FG-FARL extends conformal-style safety by allocating group-specific feasibility regions that respect fairness targets and practical feasibility (non-trivial coverage). Separating risk calibration from preference learning keeps the latter auditable, while providing explicit dials ($\alpha,\epsilon$) to govern fairness-safety trade-offs. Identical $\hat V_0$ across linear FQE runs should not be over-interpreted; fairness metrics and subgroup gaps provide the salient signal. Failure modes include label noise in subgroup variables, covariate shift, and very small groups (handled by the fallback). Richer risk models (e.g., gradient-boosted trees) are drop-in replacements if desired.

\paragraph{Additional analysis and answers to key questions.}
\emph{Exploration-exploitation in offline RL:} We do not explore; all learning occurs from logged data. Safety is enforced via the training distribution (safe-set restriction) and can be optionally enforced at deployment with a reject option when $\hat p_h(s)\ge \tau_{G(s)}$. \emph{Different action spaces across groups:} The primitive action set is shared. Training on group-safe sets changes class balance but not the catalog of actions. In operations, a guardrail can disallow actions in unsafe states; this affects \emph{when} actions are taken, not \emph{which} actions exist. \emph{Sensitivity to $\alpha,\epsilon$:} Ablations indicate monotone coverage-harm trade-offs; smaller $\alpha$ (coverage mode) and $\epsilon$ (harm mode) tighten safety at the cost of coverage. \emph{Why logistic regression for risk?} It yields calibrated, auditable probabilities and fast retraining; more expressive models are compatible but increase governance burden.

\paragraph{On identical or near-identical values across methods.} Our value proxies are deliberately conservative and stable: (i) rewards are sparse (mostly zero with rare negative harms), (ii) the simple FQE uses a small feature map to avoid environment/version brittleness, and (iii) policies are evaluated on the \emph{same} episodes. Under these conditions, different trained policies that mostly align with behavior on support will exhibit very similar $\hat V_0$. This is not a bug: DR estimates with 95\% CIs corroborate negligible differences between FG-FARL and Fair-BC (overlapping intervals; Table~\ref{tab:dr-by-group}), while fairness diagnostics (coverage/harm by group) reveal the intended differences. For readers desiring more separation, our code supports richer feature maps (parsed state features, polynomial expansions) and neural OPE (CQL+FQE), which we report as sensitivity rather than primary metrics due to instability across environments.

\section{Ethical Considerations}
Given the sensitive healthcare context, we emphasize: (i) FG-FARL allocates opportunity (coverage) or harm caps transparently across groups, but cannot repair upstream data bias; (ii) subgroup definitions must be governed and audited; (iii) deployment should include human oversight and continuous monitoring; (iv) fairness constraints may reduce aggregate efficiency; stakeholders must weigh trade-offs. This research used de-identified data and was reviewed by an independent IRB; it was approved with a waiver of consent by the Copernicus WIRB (WCG IRB).

\section{Policy Implications}
Health plans can operationalize FG-FARL to balance safety and equity in outreach and triage. Group thresholds offer transparent levers for governance, with auditable logs of coverage and harm trade-offs. The method is lightweight enough for routine recalibration and drill-down reporting.

\appendix
\section{Formal Statements and Proofs}
\paragraph{Setup and notation.} Fix a protected attribute $G:\mathcal{S}\to\mathcal{G}$. Let $Z=\hat p_h(S)$ denote the risk score on state $S$ drawn from the calibration distribution conditional on $G(S)=g$. Let $F_g$ be the CDF of $Z\mid G=g$ and $\widehat F_g$ its empirical CDF from $n_g$ exchangeable calibration samples. For $q\in(0,1)$, write $F_g^{-1}(q)=\inf\{t: F_g(t)\ge q\}$ and likewise for $\widehat F_g^{-1}(q)$.

\begin{theorem}[Groupwise coverage, coverage mode]\label{thm:coverage}
Suppose within-group exchangeability holds and the score $Z=\hat p_h(S)$ is almost surely monotone in the adverse class. Let $\tau_g=\widehat F_g^{-1}(1-\alpha)$ be the $(1-\alpha)$ empirical quantile computed on the calibration slice for group $g$. Then for any $\delta\in(0,1)$, with probability at least $1-\delta$ over the calibration sample,
\[
\Pr\big(Z\le \tau_g\mid G=g\big) \ge 1-\alpha - \varepsilon_{n_g}(\delta),\quad \text{where}\quad \varepsilon_{n_g}(\delta)=\sqrt{\tfrac{1}{2n_g}\ln\tfrac{2}{\delta}}.\footnote{We adopt the right-continuous CDF convention $F(\tau)=\Pr(Z\le \tau)$. For continuous or strictly monotone scores, $\Pr(Z<\tau)=\Pr(Z\le \tau)$.}
\]
Consequently, the safe set $\{s: \hat p_h(s)<\tau_g\}$ attains population coverage at least $1-\alpha-\varepsilon_{n_g}(\delta)$ within group $g$ with probability $\ge 1-\delta$. Finite-sample uncertainty on the \emph{empirical} coverage can be reported using Wilson intervals \citep{wilson1927probable}.
\end{theorem}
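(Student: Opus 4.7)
The plan is to reduce the statement to a uniform concentration bound on the within-group empirical CDF and then evaluate that bound at the specific quantile threshold $\tau_g$. First I would condition on $G=g$ throughout, so that the within-group exchangeability assumption lets me treat the $n_g$ calibration scores $Z_1,\ldots,Z_{n_g}$ as identically distributed with CDF $F_g$, and $\widehat F_g$ as their empirical CDF. This conditioning isolates the group-specific statement from any cross-group dependence and makes the quantile operations well-defined.

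Second, I would invoke the Dvoretzky--Kiefer--Wolfowitz inequality with Massart's sharp constant: for any $\varepsilon>0$,
\[
\Pr\Bigl(\sup_{t\in\mathbb{R}}\bigl|\widehat F_g(t)-F_g(t)\bigr|>\varepsilon\Bigr)\le 2\exp(-2n_g\varepsilon^2).
\]
Equating the right-hand side to $\delta$ and solving for $\varepsilon$ returns exactly $\varepsilon_{n_g}(\delta)=\sqrt{(2n_g)^{-1}\ln(2/\delta)}$. Let $\mathcal{E}$ denote the complement event, so $\Pr(\mathcal{E})\ge 1-\delta$. On $\mathcal{E}$, instantiating the uniform bound at $t=\tau_g$ gives $F_g(\tau_g)\ge \widehat F_g(\tau_g)-\varepsilon_{n_g}(\delta)$. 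By the right-continuous convention for the empirical quantile, $\widehat F_g(\tau_g)\ge 1-\alpha$, and so $\Pr(Z\le\tau_g\mid G=g)=F_g(\tau_g)\ge 1-\alpha-\varepsilon_{n_g}(\delta)$, which is the claimed groupwise coverage. The statement about the open-interval safe set $\{s:\hat p_h(s)<\tau_g\}$ then follows from the monotonicity/continuity footnote of the theorem, under which $\Pr(Z<\tau_g\mid G{=}g)=\Pr(Z\le\tau_g\mid G{=}g)$ up to a null set.

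The main obstacle I anticipate is interpretive rather than technical: the exchangeability assumption is where all the deployment-relevant work hides. Logged calibration data from trajectories are typically clustered within episodes and collected under a behavior policy that itself depends on $G$; strict i.i.d. sampling is not automatic. I would therefore be explicit that DKW requires a calibration slice that is either drawn i.i.d. from the group-conditional marginal of $Z$, or satisfies exchangeability strong enough to preserve the Massart constant (otherwise $n_g$ should be replaced by an effective sample size absorbing within-episode correlation). A secondary bookkeeping issue is the treatment of ties and the quantile convention; adopting the right-continuous definition consistently, as the footnote does, avoids the $1/(n_g+1)$ floor correction that appears in the split-conformal marginal guarantee and keeps the bound clean. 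With those conventions fixed, the rest is a direct DKW application.
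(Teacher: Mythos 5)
Your proof is correct and follows essentially the same route as the paper's: apply the DKW inequality (with Massart's constant) to bound $\sup_t|\widehat F_g(t)-F_g(t)|$, use $\widehat F_g(\tau_g)\ge 1-\alpha$ from the empirical-quantile definition, and conclude via the right-continuous CDF convention. Your additional observation that DKW formally requires i.i.d.\ (or an effective-sample-size correction for within-episode clustering) rather than mere exchangeability is a fair caveat the paper glosses over, but it does not change the argument.
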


\begin{proof}
By the Dvoretzky–Kiefer–Wolfowitz inequality, $\sup_t\,|\widehat F_g(t)-F_g(t)|\le \varepsilon_{n_g}(\delta)$ with probability at least $1-\delta$ \citep{d1978convergence}. Let $\tau_g=\widehat F_g^{-1}(1-\alpha)$. Then $\widehat F_g(\tau_g)\ge 1-\alpha$ by definition of the empirical quantile. Hence, on the DKW event,
\[
F_g(\tau_g) \ge \widehat F_g(\tau_g) - \varepsilon_{n_g}(\delta) \ge 1-\alpha-\varepsilon_{n_g}(\delta).
\]
But $F_g(\tau_g)=\Pr(Z\le \tau_g\mid G=g)$ under the right-continuous convention, which yields the claim. The monotonicity-in-class assumption ensures that larger $Z$ monotonically corresponds to higher adverse likelihood, aligning the conformal acceptance set $\{Z<\tau_g\}$ with lower-risk states and avoiding set inversions.
\end{proof}

\begin{proposition}[Equal-group-weighted ERM equivalence]\label{prop:fairbc-eq}
Let $\mathcal{I}_g$ be the index set of safe samples in group $g$ and $n_g=|\mathcal{I}_g|$. Define the group-averaged empirical loss $\bar L_g(\theta)=\frac{1}{n_g}\sum_{i\in \mathcal{I}_g} \ell(\theta; s_i,a_i)$. Then the objective that gives equal weight to each group,
\[
\min_\theta \; \frac{1}{|\mathcal{G}|} \sum_{g\in\mathcal{G}} \bar L_g(\theta),
\]
is exactly empirical risk minimization with per-sample weights $w_i \propto 1/n_{g(i)}$ (inverse sample count within the group of $i$). Consequently, setting $w(s) \propto 1/\hat c_{G(s)}$ with $\hat c_g=n_g/\sum_h n_h$ implements the same equal-group-weighting principle used in Fair-BC.
\end{proposition}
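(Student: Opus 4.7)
The plan is to unfold the equal-group-weighted objective and recognize it as a per-sample weighted empirical risk. First I would substitute the definition of $\bar L_g$ and pull the factor $1/|\mathcal{G}|$ inside, giving $\frac{1}{|\mathcal{G}|}\sum_{g}\bar L_g(\theta) = \sum_{g}\sum_{i\in\mathcal{I}_g}\frac{1}{|\mathcal{G}|\,n_g}\,\ell(\theta;s_i,a_i)$. Because the group index sets $\{\mathcal{I}_g\}_{g\in\mathcal{G}}$ partition the safe sample pool by construction, I can reindex the double sum as a single sum $\sum_i w_i\,\ell(\theta;s_i,a_i)$ with $w_i = 1/(|\mathcal{G}|\,n_{g(i)})$. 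The global factor $1/|\mathcal{G}|$ is constant in $i$, and the $\arg\min$ is invariant under positive rescaling of the objective, so this is exactly weighted ERM with weights proportional to $1/n_{g(i)}$.

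Next I would relate the Fair-BC recipe $w(s)\propto 1/\hat c_{G(s)}$ to the same family of weights. Writing $N=\sum_{h}n_h$, I have $1/\hat c_g = N/n_g$, which differs from $1/n_g$ only by the sample-independent constant $N$. Thus the Fair-BC weighting and the equal-group-weighting scheme differ by an overall positive scalar that is absorbed into the $\arg\min$, so both induce the same optimization problem and the claimed equivalence follows. I would close by noting that, when the learner uses a normalized weighted loss $\sum_i w_i \ell_i / \sum_i w_i$, the normalizer is likewise constant in $\theta$ and does not affect the minimizer.

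I do not anticipate a substantive obstacle: the proposition is fundamentally an algebraic identity stating that group-balanced empirical loss and inverse-count reweighting coincide as optimization problems. The only points that require explicit care are (i) asserting that $\{\mathcal{I}_g\}$ partitions the sample indices, so that collapsing the double sum to a single sum neither double-counts nor drops samples, and (ii) invoking $\arg\min$ invariance under positive scaling so that weights proportional to $1/n_{g(i)}$ — rather than literally equal to a specific expression — suffice for the equivalence claimed in the statement.
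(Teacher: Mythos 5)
Your proposal is correct and follows essentially the same route as the paper's proof: expand $\bar L_g$, collapse the double sum over the partition $\{\mathcal{I}_g\}$ into a single per-sample weighted sum with $w_i = 1/(|\mathcal{G}|\,n_{g(i)})$, and absorb the constants $|\mathcal{G}|$ and $N=\sum_h n_h$ into the proportionality so that $w_i\propto 1/n_{g(i)}\propto 1/\hat c_{g(i)}$. Your added remarks on the partition structure and $\arg\min$ invariance under positive rescaling are sound refinements of the same argument.
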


\begin{proof}
Expand the objective:
\[
\frac{1}{|\mathcal{G}|} \sum_{g} \bar L_g(\theta)
= \frac{1}{|\mathcal{G}|} \sum_{g} \frac{1}{n_g} \sum_{i\in \mathcal{I}_g} \ell(\theta; s_i,a_i)
= \sum_{i} \Big( \frac{1}{|\mathcal{G}|\, n_{g(i)}} \Big) \ell(\theta; s_i,a_i).
\]
Thus the equal-group objective is equivalent to ERM with weights $w_i = \frac{1}{|\mathcal{G}|\, n_{g(i)}} \propto 1/n_{g(i)}$. Writing $n_{g(i)} = \hat c_{g(i)} \sum_h n_h$ gives $w_i \propto 1/\hat c_{g(i)}$, as implemented.
\end{proof}

\begin{theorem}[Existence of harm-capped threshold, harm mode]\label{thm:harm-existence}
Fix $g\in\mathcal{G}$. Define $h_g(\tau)=\mathbb{E}[\,\mathbb{1}\{r<0\}\mid \hat p_h(S)<\tau,\, G(S)=g\,]$ for $\tau\in[0,1]$, and let $\bar h$ be the global harm among safe states under the global threshold $\tau_{\mathrm{glob}}=F^{-1}(1-\alpha)$ computed on the calibration slice. Under (A1)--(A2),
Then for any $\epsilon\ge 0$, the feasible set $\mathcal{T}_g=\{\tau\in[0,1]: h_g(\tau)\le \bar h+\epsilon\}$ is a (possibly degenerate) interval $[0,\tau_g^*]$ with $\tau_g^*=\sup\mathcal{T}_g\in\mathcal{T}_g$. In particular, selecting $\tau_g^*$ maximizes coverage $\Pr(\hat p_h(S)<\tau\mid G=g)$ subject to the harm cap.
\end{theorem}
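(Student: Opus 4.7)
The plan is to reduce the existence claim to a one-dimensional monotone-constraint problem: show that $h_g(\tau)$ is non-decreasing in $\tau$, then invoke left-continuity to conclude that the sublevel set $\mathcal{T}_g$ is a closed left-interval whose supremum lies in $\mathcal{T}_g$. The assumption (A1)--(A2) I would take to encode (a) monotonicity of the adverse-class conditional $\Pr(r<0\mid Z=z,G=g)$ in $z$, and (b) non-degeneracy of the conditional distribution of $Z\mid G=g$ on $(0,1)$ (with the convention $h_g(0)=0$ or $h_g(0)=\bar h$ when the safe set is empty, so feasibility is trivial near $\tau=0$).

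First I would verify monotonicity. For $0\le \tau_1<\tau_2\le 1$, decompose
\[
h_g(\tau_2)\,\Pr(Z<\tau_2\mid G=g)=h_g(\tau_1)\,\Pr(Z<\tau_1\mid G=g)+\mathbb{E}\bigl[\mathbb{1}\{r<0\}\,\mathbb{1}\{\tau_1\le Z<\tau_2\}\mid G=g\bigr].
\]
Under (A1), the conditional harm rate on the annulus $\{\tau_1\le Z<\tau_2\}$ is at least $h_g(\tau_1)$ (states added at higher $Z$ are weakly more harmful), so the weighted-average identity forces $h_g(\tau_2)\ge h_g(\tau_1)$. Monotonicity immediately implies that if $\tau\in\mathcal{T}_g$ then $[0,\tau]\subseteq\mathcal{T}_g$, i.e., $\mathcal{T}_g$ is a left-interval.

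Next I would establish attainment of the supremum. Because the safe set uses the strict inequality $\{Z<\tau\}$, both the numerator $\Pr(r<0,Z<\tau,G=g)$ and denominator $\Pr(Z<\tau\mid G=g)$ are left-continuous in $\tau$; hence so is their ratio $h_g$ wherever the denominator is positive. Taking $\tau_n\uparrow \tau_g^*:=\sup\mathcal{T}_g$ with $\tau_n\in\mathcal{T}_g$ gives $h_g(\tau_g^*)=\lim_n h_g(\tau_n)\le \bar h+\epsilon$, so $\tau_g^*\in\mathcal{T}_g$ and $\mathcal{T}_g=[0,\tau_g^*]$. Non-emptiness of $\mathcal{T}_g$ follows from the convention at $\tau=0$ together with $\epsilon\ge 0$. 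Finally, since the coverage map $\tau\mapsto\Pr(Z<\tau\mid G=g)$ is also non-decreasing, choosing the rightmost feasible $\tau=\tau_g^*$ trivially maximizes coverage subject to the harm cap.

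The main obstacle I anticipate is the attainment step rather than monotonicity: when $F_g$ has an atom at $\tau_g^*$, the strict-inequality safe set may exclude that atom while an arbitrarily small perturbation to the right would include it and push $h_g$ above $\bar h+\epsilon$. Left-continuity of $h_g$ at $\tau_g^*$ handles this cleanly (the atom is excluded on both sides of the limit), but it does require being careful that the conformal acceptance is defined with $<$ rather than $\le$ and that the denominator is bounded away from zero along the approaching sequence; a short aside noting these conventions, and a fallback to the global $\tau$ when the denominator vanishes, covers the degenerate case allowed in the statement.
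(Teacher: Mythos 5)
Your proposal is correct, and it takes a more substantive route than the paper's own proof. The theorem statement never actually spells out (A1)--(A2), and the paper's proof uses them essentially as black boxes: (A2) is taken to say directly that the sublevel sets of $h_g$ are closed intervals $[0,\tau_c]$, and (A1) that $F_g$ is non-decreasing, so the argument is little more than a restatement. You instead posit a more primitive assumption --- monotonicity of $\Pr(r<0\mid Z=z,G=g)$ in the score $z$ --- and \emph{derive} monotonicity of $h_g$ via the weighted-average decomposition over the annulus $\{\tau_1\le Z<\tau_2\}$; that decomposition and the comparison of the annulus harm rate to $h_g(\tau_1)$ are both valid. Your attainment step is also the correct one: with the strict-inequality convention $\{Z<\tau\}$, both numerator and denominator of $h_g$ are left-continuous, and left-continuity (i.e., lower semicontinuity from the left of a non-decreasing function) is exactly what makes the sublevel set closed at its right endpoint. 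Note that the paper's proof asserts ``right-continuity ensures closedness,'' which is the wrong direction for a non-decreasing function (a right-continuous non-decreasing step function has a non-closed sublevel set at the jump), so your version repairs a genuine slip. Your cautions about atoms of $F_g$ at $\tau_g^*$ and about the vanishing-denominator degenerate case are also points the paper glosses over. The only caveat is interpretive: since (A1)--(A2) are undefined in the text, your reading of them is a reasonable reconstruction rather than a verification of the paper's intended hypotheses.
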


\begin{proof}
By (A2), the sublevel sets $\{\tau: h_g(\tau)\le c\}$ are intervals of the form $[0,\tau_c]$ for any $c\in[0,1]$; right-continuity ensures closedness. Taking $c=\bar h+\epsilon$ yields $\mathcal{T}_g=[0,\tau_g^*]$ with $\tau_g^*=\sup\mathcal{T}_g$ attained. Coverage within group $g$ is $\Pr(\hat p_h(S)<\tau\mid G=g)=F_g(\tau)$, which is non-decreasing in $\tau$ by (A1). Therefore, any feasible $\tau$ is dominated by $\tau_g^*$ in coverage, establishing optimality.
\end{proof}

\begin{theorem}[Consistency of empirical harm-capped selection]\label{thm:harm-consistency}
Let $\widehat h_g^{(n)}(\tau)$ be the empirical conditional harm rate computed from an exchangeable calibration sample of size $n_g$, and define the empirical selector $\widehat\tau_g=\sup\{\tau:\widehat h_g^{(n)}(\tau)\le \bar h+\epsilon\}$. Suppose (A1)–(A2) hold, $\widehat h_g^{(n)}$ converges to $h_g$ uniformly in $\tau$ in probability, and additionally:
\begin{itemize}
  \item[(A3)] $h_g$ is continuous at $\tau_g^*$ (no boundary jump).
\end{itemize}
Then $\widehat\tau_g\xrightarrow{p} \tau_g^*$, where $\tau_g^*$ is given by Theorem~\ref{thm:harm-existence}.
\end{theorem}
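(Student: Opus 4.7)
The plan is to sandwich $\widehat\tau_g$ between $\tau_g^*-\eta$ and $\tau_g^*+\eta$ on the high-probability event $E_n(\delta)=\{\sup_\tau|\widehat h_g^{(n)}(\tau)-h_g(\tau)|<\delta\}$, choosing $\delta$ as a function of $\eta$ and then letting $n\to\infty$. Fix $\eta>0$ and handle the two directions separately, exploiting the monotone structure of $h_g$ from (A2) to turn the pointwise uniform closeness into pointwise feasibility/infeasibility statements on entire rays.

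For the upper direction $\widehat\tau_g\le \tau_g^*+\eta$, Theorem~\ref{thm:harm-existence} plus (A2) give $h_g(\tau_g^*+\eta)>\bar h+\epsilon$; I would denote the gap by $\Delta^+$ and pick $\delta<\Delta^+/2$. Monotonicity then propagates the bound: for every $\tau\ge\tau_g^*+\eta$, on $E_n(\delta)$ we have $\widehat h_g^{(n)}(\tau)\ge h_g(\tau_g^*+\eta)-\delta>\bar h+\epsilon$, so no such $\tau$ is empirically feasible and $\widehat\tau_g\le\tau_g^*+\eta$.

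For the lower direction $\widehat\tau_g\ge \tau_g^*-\eta$, I would first use (A3) to pin $h_g(\tau_g^*)=\bar h+\epsilon$: feasibility of $\tau_g^*$ gives $\le$, while $h_g(\tau)>\bar h+\epsilon$ for $\tau>\tau_g^*$ combined with right-continuity at $\tau_g^*$ gives $\ge$. I would then select a witness $\tau^\dagger\in[\tau_g^*-\eta,\tau_g^*)$ with strict slack $\Delta^-=\bar h+\epsilon-h_g(\tau^\dagger)>0$, and choose $\delta<\Delta^-$ so that on $E_n(\delta)$ we have $\widehat h_g^{(n)}(\tau^\dagger)<\bar h+\epsilon$; hence $\tau^\dagger$ is empirically feasible and $\widehat\tau_g\ge\tau^\dagger\ge\tau_g^*-\eta$. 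Combining the two directions and invoking $\Pr(E_n(\delta)^c)\to 0$ from the uniform-convergence hypothesis yields $\Pr(|\widehat\tau_g-\tau_g^*|>\eta)\to 0$.

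The main obstacle is securing the witness $\tau^\dagger$: if $h_g$ is identically $\bar h+\epsilon$ on a left-neighborhood $[\tau_0,\tau_g^*]$ of width exceeding $\eta$, no strictly-below-threshold $\tau^\dagger$ exists in $[\tau_g^*-\eta,\tau_g^*)$, and $\widehat\tau_g$ may drift anywhere on the plateau under sampling noise. I would address this by reading (A3) as a local strict-crossing condition, i.e., $h_g$ is not constant at the level $\bar h+\epsilon$ on any left-neighborhood of $\tau_g^*$, which is the natural regularity that a ``no boundary jump'' assumption is meant to capture at the feasibility boundary; under this reading the witness exists automatically. Modulo this regularity subtlety, the rest of the argument is a routine consequence of uniform convergence and monotonicity.
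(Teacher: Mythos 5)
Your argument is essentially the paper's: both proofs sandwich $\widehat\tau_g$ using the uniform-convergence event, with the upper bound coming from monotonicity of $h_g$ (any $\tau\ge\tau_g^*+\eta$ has a strict population-level violation that survives perturbation by a small enough $\delta$) and the lower bound coming from exhibiting a strictly feasible point just left of $\tau_g^*$. The only organizational difference is that you fix the accuracy $\eta$ in $\tau$-space and back out the tolerance $\delta$, whereas the paper fixes the tolerance $\eta$ on the harm function and defines the induced displacements $\tau^{\pm}(\eta)$, then sends $\eta\downarrow 0$; these are equivalent.

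Your flagged obstacle is genuine, and it is worth noting that the paper's proof does not escape it either --- it is merely hidden in the unproved assertion that $\tau^{-}(\eta)=\inf\{\delta\ge 0: h_g(\tau_g^*-\delta)\le \bar h+\epsilon-\eta\}\to 0$ as $\eta\to 0$. If $h_g\equiv\bar h+\epsilon$ on a left-neighborhood $[\tau_0,\tau_g^*]$, then $\tau^{-}(\eta)\ge\tau_g^*-\tau_0>0$ for every $\eta>0$, the paper's displayed sandwich becomes vacuous on the left, and indeed $\widehat\tau_g$ can concentrate near $\tau_0$ under an arbitrarily small upward perturbation of the empirical harm curve. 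Continuity of $h_g$ at $\tau_g^*$ (the stated (A3)) does not exclude such a plateau, so your proposed strengthening of (A3) to a local strict-crossing condition (equivalently, $\sup\{\tau: h_g(\tau)<\bar h+\epsilon\}=\tau_g^*$) is exactly what is needed to make both your proof and the paper's go through. Modulo that, your two directions are correct and complete.
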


\begin{proof}
Fix $\eta>0$. By uniform convergence, there exists $N$ such that for $n_g\ge N$, with probability $\ge 1-\delta(\eta)$ we have $\sup_\tau |\widehat h_g^{(n)}(\tau)-h_g(\tau)|<\eta$. On this event,
\[
\{\tau: \widehat h_g^{(n)}(\tau) \le \bar h+\epsilon\} \subseteq \{\tau: h_g(\tau) \le \bar h+\epsilon+\eta\}
\quad \text{and} \quad
\{\tau: h_g(\tau) \le \bar h+\epsilon-\eta\} \subseteq \{\tau: \widehat h_g^{(n)}(\tau) \le \bar h+\epsilon\}.
\]
Define $\tau^+(\eta)=\inf\{\delta\ge 0: h_g(\tau_g^*+\delta) \ge \bar h+\epsilon+\eta\}$ and $\tau^-(\eta)=\inf\{\delta\ge 0: h_g(\tau_g^* - \delta) \le \bar h+\epsilon-\eta\}$. By (A2)–(A3), $\tau^+(\eta),\tau^-(\eta) \to 0$ as $\eta\to 0$. The containments above imply
\[
\tau_g^* - \tau^-(\eta) \le \widehat \tau_g \le \tau_g^* + \tau^+(\eta),
\]
and letting $\eta\downarrow 0$ yields $\widehat \tau_g \xrightarrow{p} \tau_g^*$. This is a standard argmax consistency argument under uniform convergence; see, e.g., \citep{vandervaart1998asymptotic}.
\end{proof}

\section{Additional Subgroup Ablations}
We include additional subgroup audits beyond age, race, and sex. For each attribute, we show safe-set coverage (coverage mode), harm within the safe set, and subgroup values.

\begin{figure}[h]
  \centering
  \includegraphics[width=.32\textwidth]{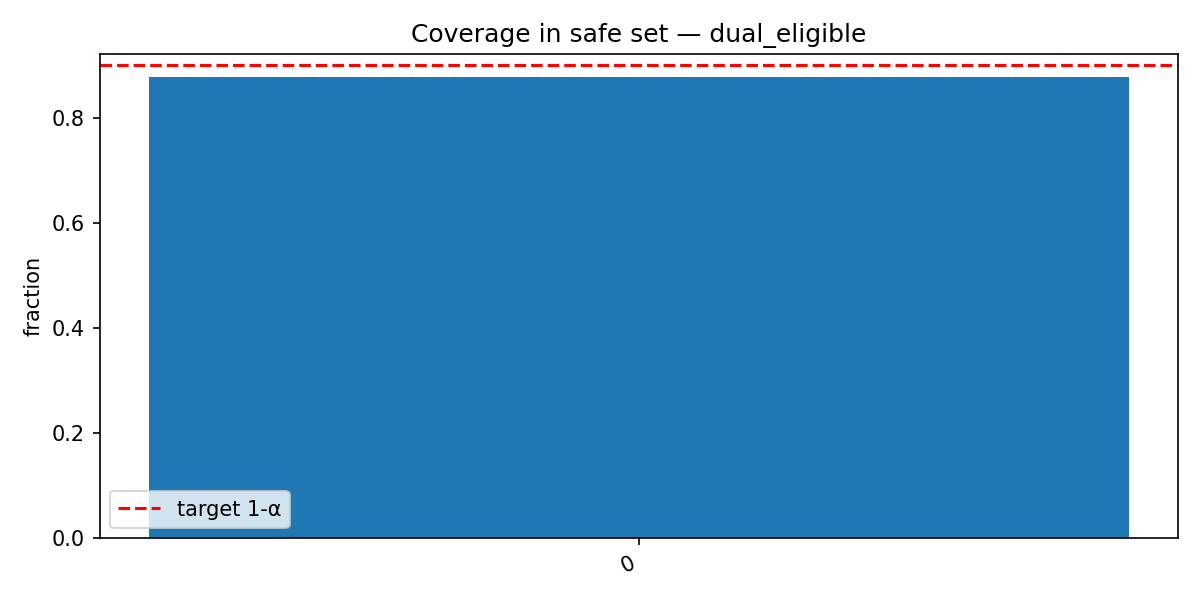}\hfill
  \includegraphics[width=.32\textwidth]{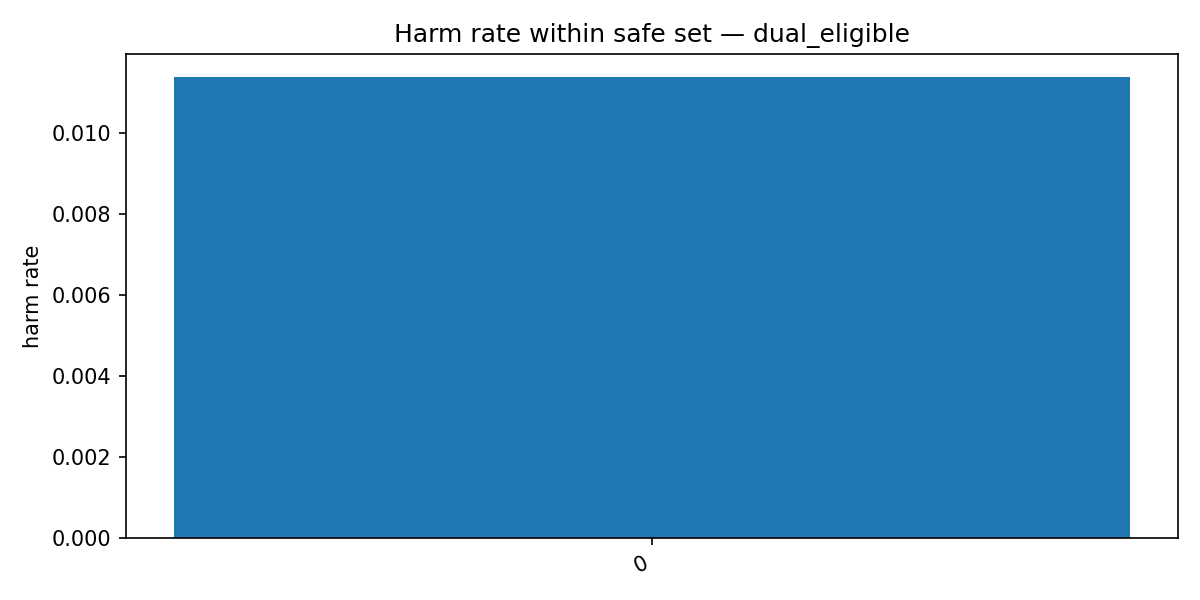}\hfill
  \includegraphics[width=.32\textwidth]{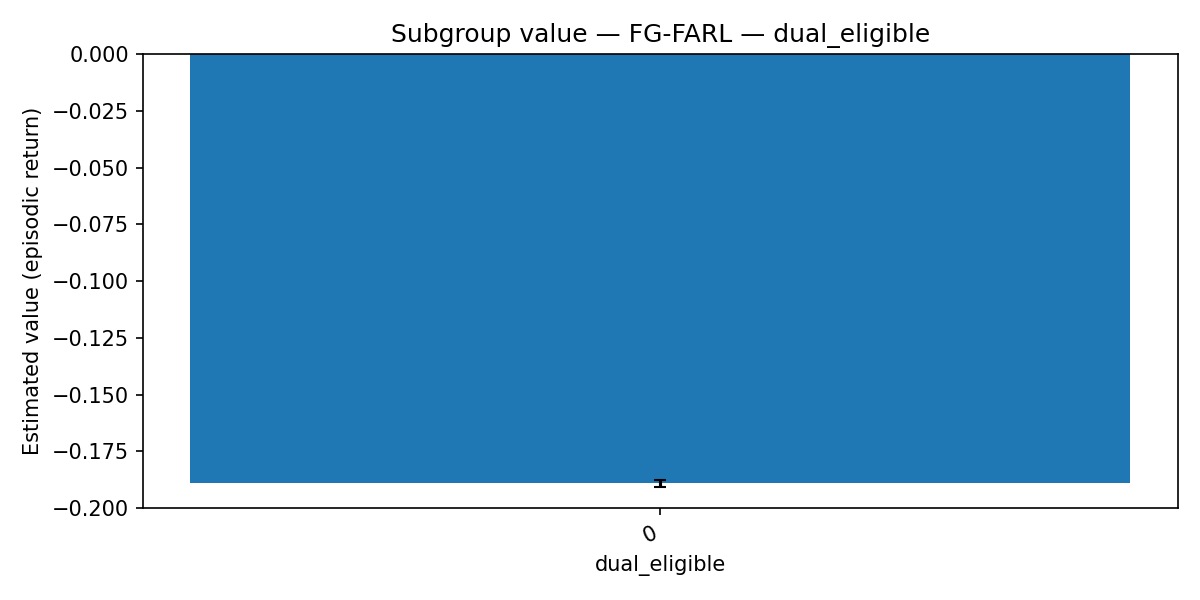}
  \caption{Dual eligibility (coverage mode).}
\end{figure}

\section{Policy Interpretability}
We report the most influential standardized coefficients from the multinomial logistic policy trained on the FG-FARL safe union (coverage mode, primary group=age). Coefficients are reported in standardized units (after imputation and scaling) and indicate the direction and magnitude of association between features and action logits.

\begin{table}[h]
  \centering
  \caption{Top coefficients per action in the multinomial logit policy (standardized units).}
  \label{tab:policy-top-features}
  \begin{tabular}{r l r}
\toprule
Action & Feature & Coefficient (std. units)\\
\midrule
0 & Age & 0.002\\
0 & Previous reward & 0.000\\
0 & Time index (t) & -0.004\\
1 & Age & 0.002\\
1 & Previous reward & 0.000\\
1 & Time index (t) & -0.001\\
2 & Time index (t) & 0.002\\
2 & Previous reward & 0.000\\
2 & Age & -0.003\\
3 & Time index (t) & 0.001\\
3 & Previous reward & 0.000\\
3 & Age & -0.000\\
4 & Age & 0.001\\
4 & Previous reward & 0.000\\
4 & Time index (t) & -0.001\\
5 & Time index (t) & 0.001\\
5 & Age & 0.000\\
5 & Previous reward & 0.000\\
6 & Time index (t) & 0.001\\
6 & Previous reward & 0.000\\
6 & Age & -0.002\\
7 & Age & 0.000\\
7 & Previous reward & 0.000\\
7 & Time index (t) & -0.001\\
8 & Time index (t) & 0.001\\
8 & Age & 0.000\\
8 & Previous reward & 0.000\\
\bottomrule
\end{tabular}
\end{table}

\begin{figure}[h]
  \centering
  \includegraphics[width=.32\textwidth]{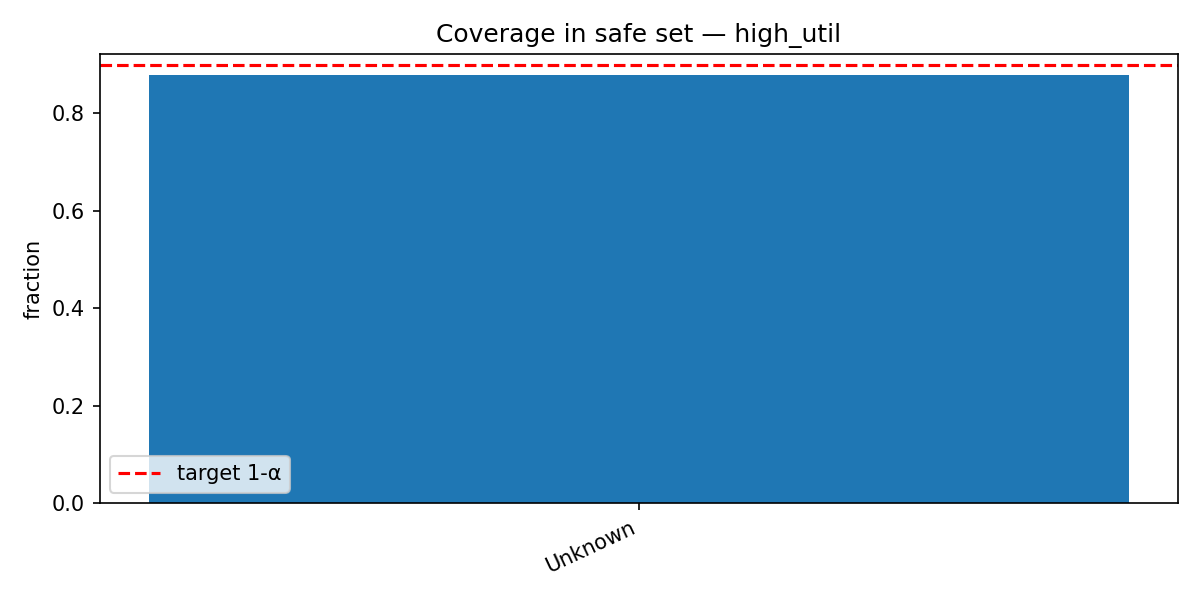}\hfill
  \includegraphics[width=.32\textwidth]{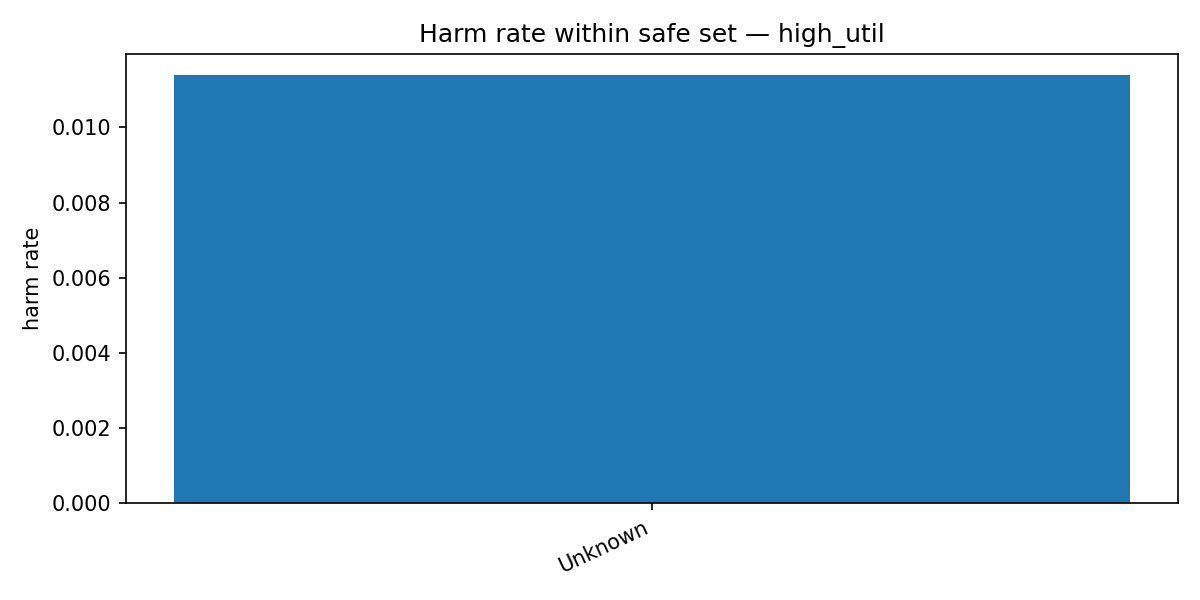}\hfill
  \includegraphics[width=.32\textwidth]{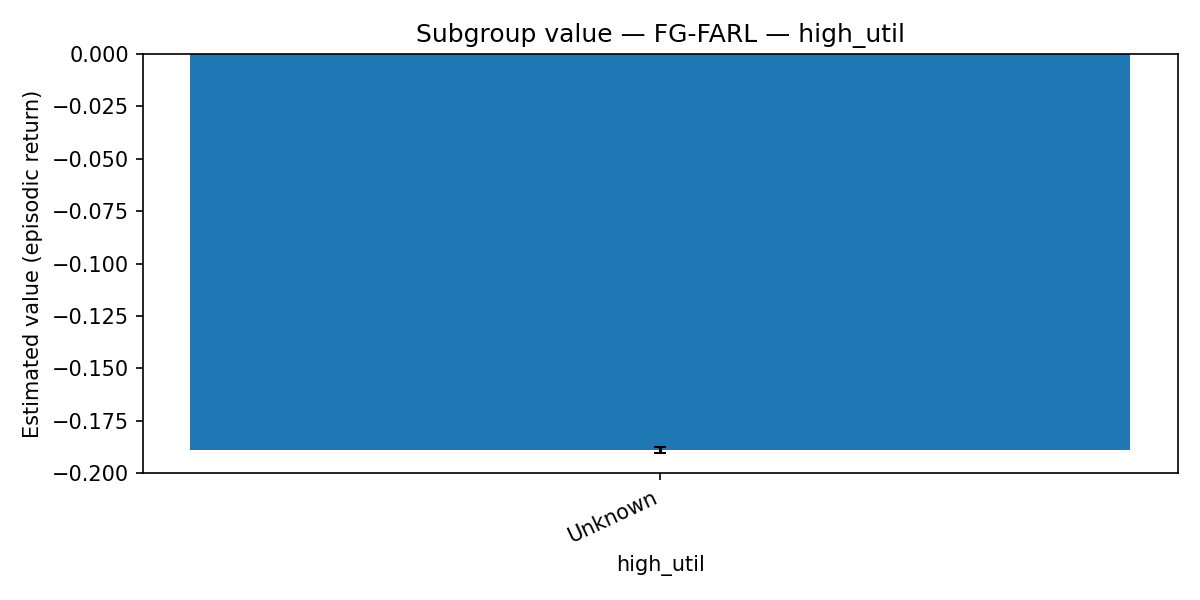}
  \caption{High utilization (coverage mode).}
\end{figure}

\begin{figure}[h]
  \centering
  \includegraphics[width=.32\textwidth]{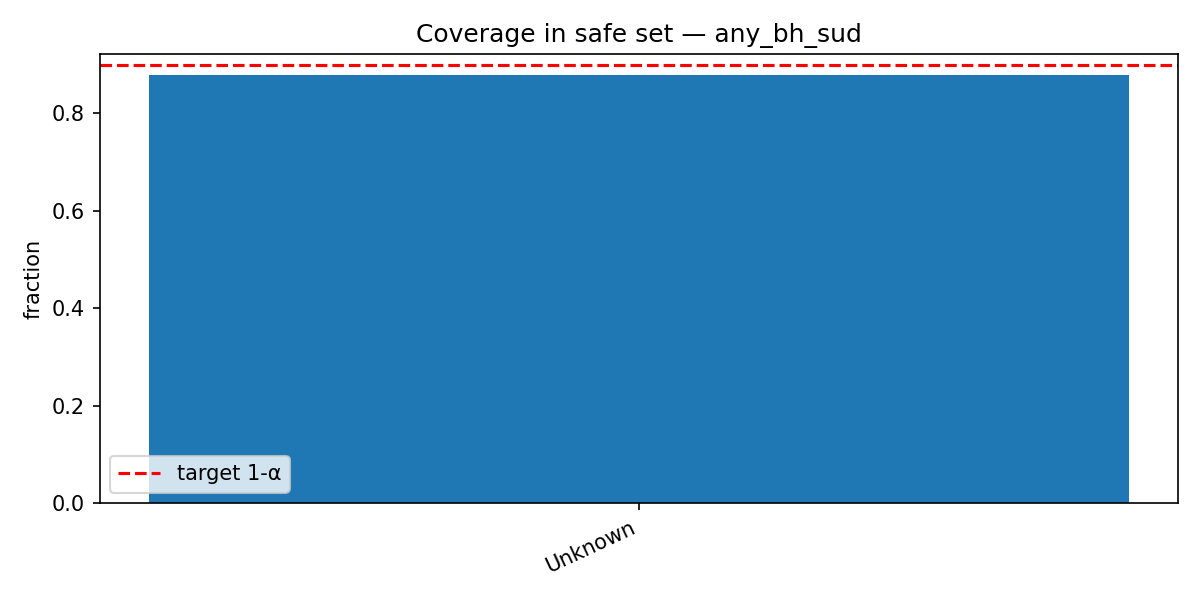}\hfill
  \includegraphics[width=.32\textwidth]{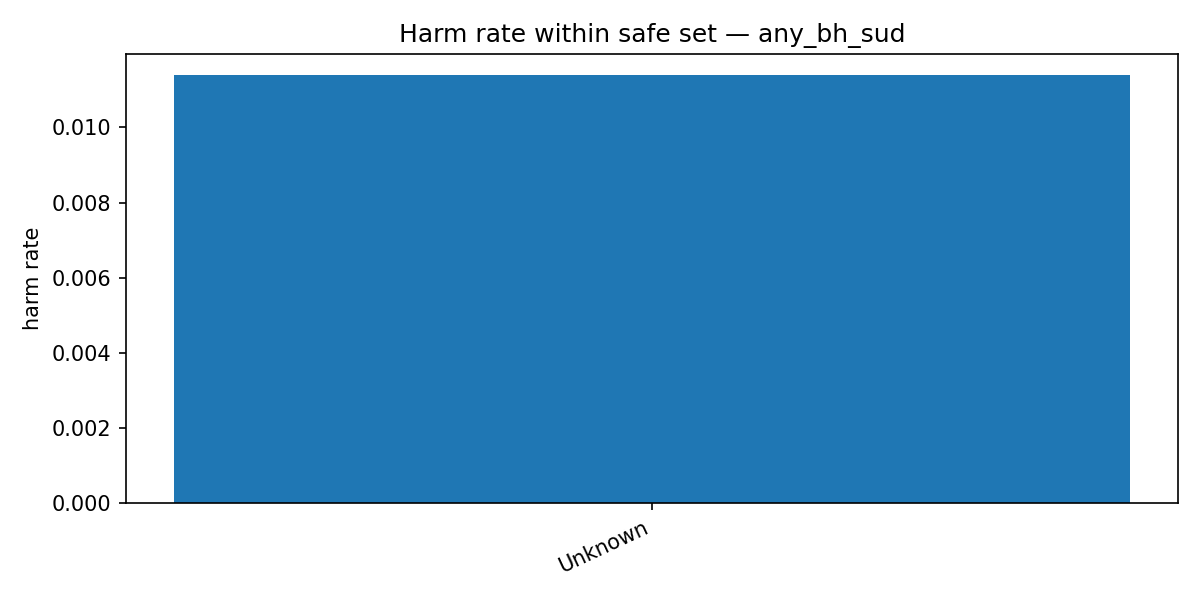}\hfill
  \includegraphics[width=.32\textwidth]{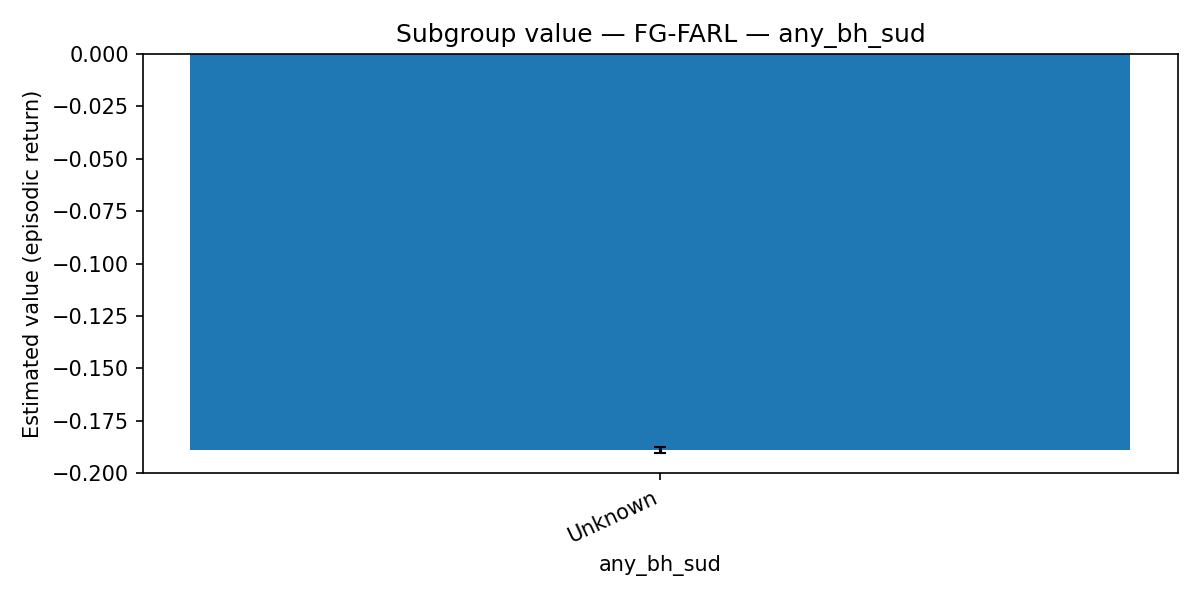}
  \caption{Any BH/SUD (coverage mode).}
\end{figure}

\begin{figure}[h]
  \centering
  \includegraphics[width=.32\textwidth]{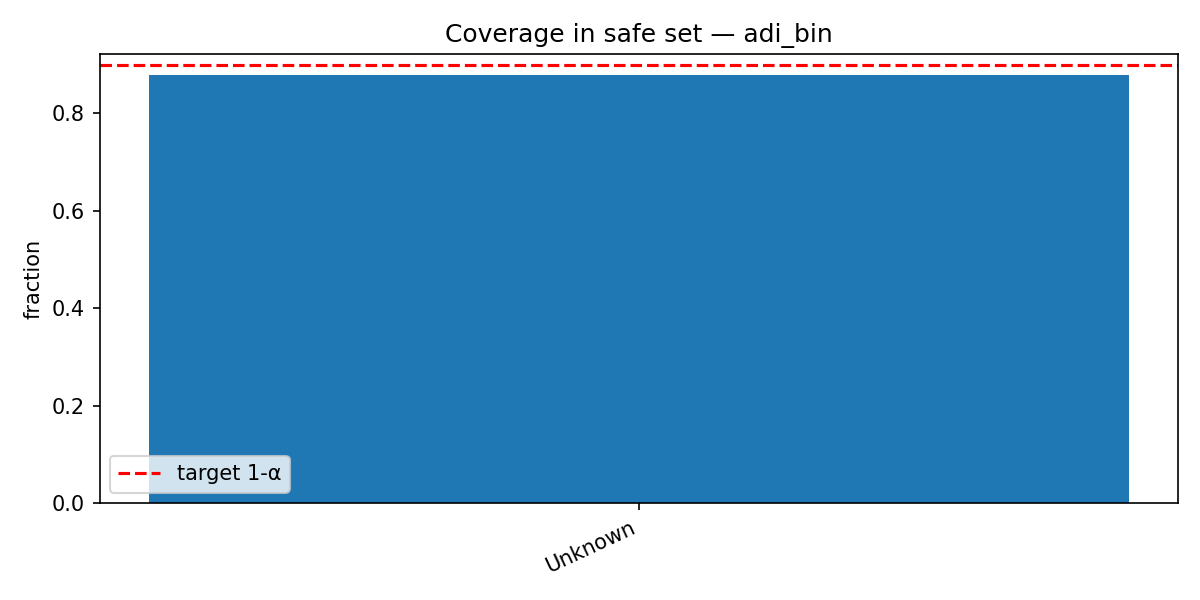}\hfill
  \includegraphics[width=.32\textwidth]{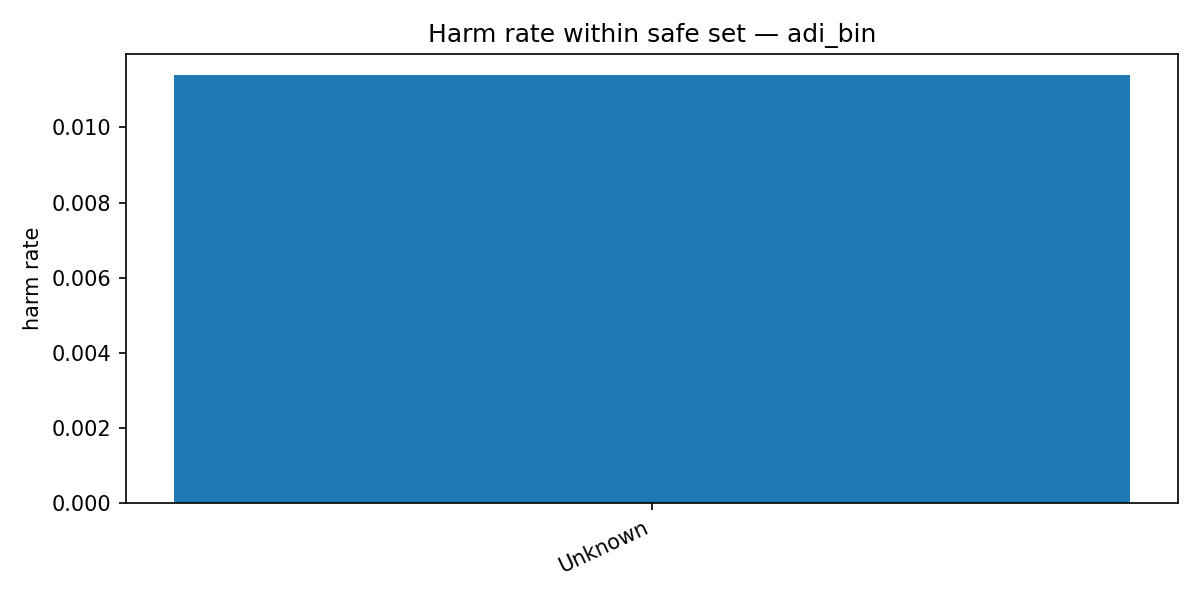}\hfill
  \includegraphics[width=.32\textwidth]{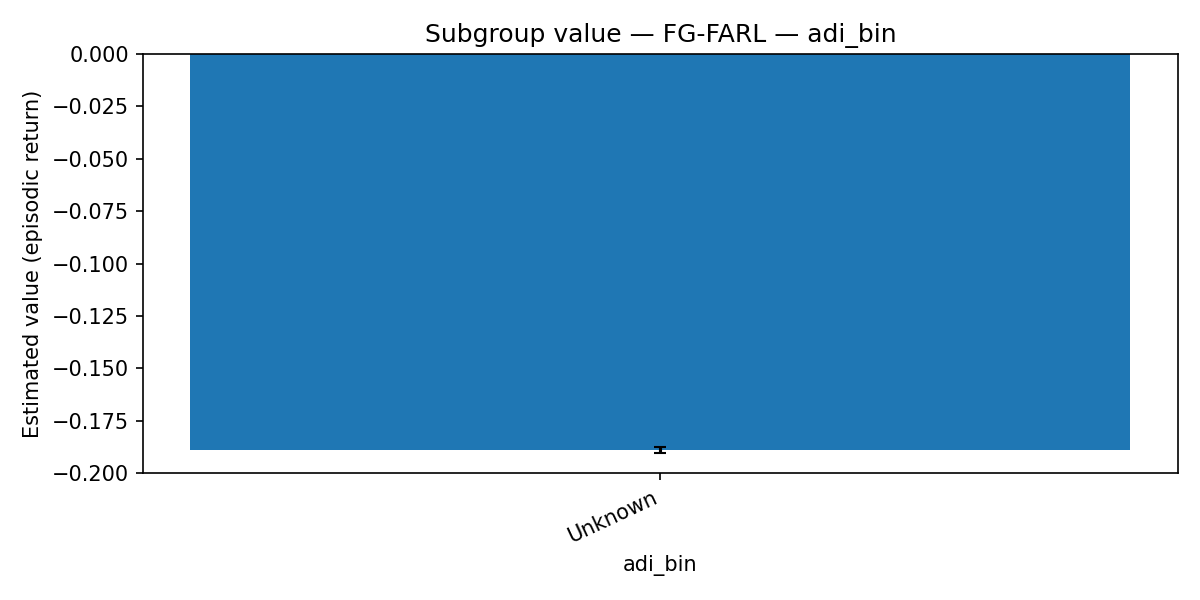}
  \caption{ADI bin (coverage mode).}
\end{figure}

\section{Reproducibility}
We fix seeds for data splits and evaluation subsets, and we report all hyperparameters and environment knobs (\texttt{ALPHA}, \texttt{EPSILON}, \texttt{MIN\_GROUP\_N}, \texttt{REWARD\_NORM}). We report value estimates with bootstrap confidence intervals and provide subgroup auditing diagnostics.\newline
\textbf{Code availability.} An end-to-end implementation with scripts to reproduce tables and figures is available at \url{https://github.com/sanjaybasu/fg_farl/tree/main}.

\bibliographystyle{unsrtnat}
\bibliography{refs}

\begin{thebibliography}{16}

\bibitem{hardt2016equality}
M.~Hardt, E.~Price, and N.~Srebro.
\newblock Equality of opportunity in supervised learning.
\newblock In \emph{Advances in Neural Information Processing Systems}, 2016.

\bibitem{angelopoulos2021gentle}
A.~N. Angelopoulos and S.~Bates.
\newblock A gentle introduction to conformal prediction and distribution-free uncertainty quantification.
\newblock \emph{arXiv:2107.07511}, 2021.

\bibitem{kumar2020cql}
A.~Kumar, A.~Zhou, G.~Tucker, and S.~Levine.
\newblock Conservative Q-learning for offline reinforcement learning.
\newblock In \emph{Advances in Neural Information Processing Systems}, 2020.

\bibitem{kostrikov2021iql}
I.~Kostrikov, A.~Nair, and S.~Levine.
\newblock Offline reinforcement learning with implicit Q-learning.
\newblock In \emph{International Conference on Learning Representations}, 2021.

\bibitem{uehara2020minimax}
M.~Uehara, N.~Jiang, Q.~Gu, D.~J. Foster, and S.~M. Kakade.
\newblock Minimax weight and Q-function learning for off-policy evaluation.
\newblock In \emph{International Conference on Machine Learning}, 2020.

\bibitem{le2019batch}
H.~M. Le, C.~Voloshin, and Y.~Yue.
\newblock Batch policy learning under constraints.
\newblock In \emph{International Conference on Machine Learning}, 2019.

\bibitem{jabbari2017fairness}
S.~Jabbari, M.~Joseph, M.~Kearns, J.~Morgenstern, and A.~Roth.
\newblock Fairness in reinforcement learning.
\newblock In \emph{ICML Workshop}, 2017.

\bibitem{gottesman2019guidelines}
O.~Gottesman, F.~Johansson, J.~Meier, J.~Dent, D.~Lee, S.~Srinivasan, K.~Zhang, E.~Choi, M.~Komorowski, A.~A. Faisal, et~al.
\newblock Guidelines for reinforcement learning in healthcare.
\newblock \emph{Nature Medicine}, 2019.

\bibitem{komiyama2018nonconvex}
J.~Komiyama, A.~Takeda, J.~Honda, and H.~Shimao.
\newblock Nonconvex optimization for robust and fair machine learning.
\newblock In \emph{Advances in Neural Information Processing Systems}, 2018.

\bibitem{barocas-hardt-narayanan}
S.~Barocas, M.~Hardt, and A.~Narayanan.
\newblock \emph{Fairness and Machine Learning}.
\newblock 2019.

\bibitem{d1978convergence}
A.~Dvoretsky, J.~Kiefer, and J.~Wolfowitz.
\newblock Convergence of the empirical distribution function to the true distribution function.
\newblock \emph{Annals of Mathematical Statistics}, 1956.

\bibitem{wilson1927probable}
E.~B. Wilson.
\newblock Probable inference, the law of succession, and statistical inference.
\newblock \emph{Journal of the American Statistical Association}, 1927.

\bibitem{vandervaart1998asymptotic}
A.~W. van~der Vaart.
\newblock \emph{Asymptotic Statistics}.
\newblock Cambridge University Press, 1998.

\bibitem{achiam2017cpo}
J.~Achiam, D.~Held, A.~Tamar, and P.~Abbeel.
\newblock Constrained policy optimization.
\newblock In \emph{International Conference on Machine Learning}, 2017.

\bibitem{tessler2019rcpo}
C.~Tessler, D.~J. Mankowitz, and S.~Mannor.
\newblock Reward constrained policy optimization.
\newblock In \emph{International Conference on Learning Representations}, 2019.

\bibitem{dalal2018safe}
G.~Dalal, K.~Dvijotham, M.~Vecerik, Y.~Tassa, R.~Munos, and S.~Mannor.
\newblock Safe exploration in continuous action spaces.
\newblock In \emph{International Conference on Machine Learning}, 2018.

\bibitem{thomas2016data}
P.~Thomas and E.~Brunskill.
\newblock Data-efficient off-policy policy evaluation for reinforcement learning.
\newblock In \emph{International Conference on Machine Learning}, 2016.

\end{thebibliography}

\end{document}